\def\eqref#1{equation~\ref{#1}}
\def\1{\bm{1}}
\DeclareMathAlphabet{\mathsfit}{\encodingdefault}{\sfdefault}{m}{sl}
\SetMathAlphabet{\mathsfit}{bold}{\encodingdefault}{\sfdefault}{bx}{n}
\newtheorem{lemma}{Lemma}
\newtheorem{proposition}{Proposition}
\newtheorem{remark}{Remark}
\newcommand{\cmark}{\textcolor{green!50!black}{\ding{51}}}
\newcommand{\xmark}{\textcolor{red}{\ding{55}}}
\title{End-to-End On-Device Quantization-Aware Training for LLMs at Inference Cost}
\author{Qitao Tan$^{1}$ ~~ Xiaoying Song$^{2}$ ~~ Jin Lu$^{1}$ ~~ Guoming Li$^{1}$ ~~ Jun Liu$^{3}$ ~~ \textbf{Lingzi Hong}$^{2}$ \\ \textbf{Caiwen Ding}$^{4}$ ~~ \textbf{Jundong Li}$^{5}$ ~~ \textbf{Xiaoming Zhai}$^{1}$ ~~ \textbf{Shaoyi Huang}$^{6}$ ~~ \textbf{Wei Niu}$^{1}$ ~~ \textbf{Geng Yuan}$^{1}$\\
$^{1}$University of Georgia ~~ $^{2}$University of North Texas ~~  $^{3}$Northeastern University \\
$^{4}$University of Minnesota ~~ $^{5}$University of Virginia ~~  $^{6}$Stevens Institute of Technology \\
}
\begin{document}

\maketitle

\begin{abstract}
Quantization is an effective technique to reduce the deployment cost of large language models (LLMs), and post-training quantization (PTQ) has been widely studied due to its efficiency. However, existing PTQ methods are limited by their inability to fine-tune model parameters and often suffer significant accuracy loss in low-bit scenarios. Quantization-aware training (QAT) provides a more principled solution, but its reliance on backpropagation incurs prohibitive memory costs, limiting its practicality for LLM deployment. To address these challenges, we propose ZeroQAT, a zeroth-order optimization-based QAT framework that supports both weight and activation quantization. ZeroQAT leverages forward-only gradient estimation to eliminate backpropagation, substantially reducing computational and memory overhead while retaining the benefits of end-to-end optimization. We further introduce a lightweight variant of ZeroQAT for quantized fine-tuning, which freezes and pre-quantizes most parameters to further cut memory usage. Experiments show that ZeroQAT consistently outperforms representative PTQ and QAT baselines while requiring significantly less memory. For example, ZeroQAT enables fine-tuning of a 13B model at extremely low bit-widths (e.g., 2-4 bits) on a single 8GB GPU, and even allows fine-tuning a 6.7B model on a OnePlus 12 smartphone, demonstrating its practicality for end-to-end QAT on resource-limited edge devices. 
\end{abstract}

\section{Introduction}
\label{intro}


Large language models (LLMs) have emerged as essential tools for advancing natural language understanding and generation, driving progress in both research and industrial applications~\citep{yang2019end, liu2019roberta, talmor2018commonsenseqa, chowdhery2023palm, zheng2020end}. Despite their transformative potential, training and deploying these models incur extremely high computational and memory costs. Such requirements not only constrain accessibility and scalability but also limit practicality in resource-constrained environments, including mobile and edge devices, embedded systems, and even enterprise servers with strict hardware or budget limitations~\citep{zeng2024flightllm, chen2024understanding, tan2025perturbation}.

To address these challenges, model compression has been widely studied, with quantization being one of the most effective and indispensable techniques for deployment. Quantization methods are generally divided into post-training quantization (PTQ) and quantization-aware training (QAT). PTQ is simple and widely adopted as it avoids retraining, while QAT usually achieves higher accuracy when resources permit. However, for LLMs the memory demand of QAT is prohibitive~\citep{team2025gemma}. For example, fine-tuning LLama-7B may require hundreds of gigabytes of GPU memory, and larger models often need multi-node clusters, which severely limits practicality. As a result, PTQ dominates in practice, not for its superiority but feasibility. 

\begin{table*}[t]
\small
\centering
\caption{Comparison of our method with existing methods. PEFT indicates parameter-efficient fine-tuning. WO and WA indicate weight-only and weight-activation quantization, respectively.
}
\vspace{-5pt}
\begin{tabular}{lccccc}
\toprule
\multirow{2}{*}{\textbf{Method}} & \multirow{2}{*}{\textbf{Category}} & \multirow{2}{*}{\textbf{\begin{tabular}[c]{@{}c@{}}Quant\\ Support\end{tabular}}} & \multicolumn{2}{c}{\textbf{Low-bit performance}}  & \multirow{2}{*}{\textbf{\begin{tabular}[c]{@{}c@{}}Memory\\ Efficiency\end{tabular}}} \\
                                 &                                    &                                                                                   & \textbf{Pre-train}    & \textbf{Fine-tune}    &                                                                                       \\ \midrule
SmoothQuant                      & Range PTQ                          & WA                                                                 & \xmark & \xmark & High                                                                                  \\
GPTQ                             & Approx PTQ                         & WO                                                                            & \xmark & \xmark & High                                                                                  \\
OmniQuant                        & Approx PTQ                         & WA                                                                 & \cmark & \xmark & Moderate                                                                              \\ \midrule
LLM-QAT                          & Full QAT                           & WA                                                                 & \cmark & \cmark & Low                                                                                   \\
QLoRA                            & PEFT QAT                           & WO                                                                            & \cmark & \cmark & Moderate                                                                              \\
EfficientQAT                     & PEFT QAT                           & WO                                                                            & \cmark & \cmark & High                                                                                  \\
ZeroQAT                          & Full/PEFT QAT                      & WA                                                                 & \cmark & \cmark & High                                                                                  \\ \bottomrule
\end{tabular}
\label{1_compare}
\vspace{-18pt}
\end{table*}

In low-bit scenarios, the adaptation capability for distribution shifts and mitigate performance degradation becomes the key factor that determines whether a quantization method can preserve model quality. This adaptation capability reflects how well the method can handle the distortions introduced by quantization, with stronger adaptation generally leading to more reliable performance.
Range-based PTQ~\citep{jacob2018quantization,nagel2019data,xiao2023smoothquant}, which derives parameters from activation or weight ranges, offers limited adaptation and often loses accuracy.
More advanced PTQ methods, such as approximation-based approaches~\citep{nagel2020up, li2021brecq, frantar2022gptq, shao2023omniquant}, better align with full-precision outputs but are still not end-to-end optimization schemes. As a result, they often introduce two characteristic issues: cumulative errors and objective inconsistency, hinder accuracy especially in low-bit settings.
These issues are amplified in fine-tuned models, which are highly task-specific and sensitive to quantization perturbations~\citep{dong2021should}. Consequently, PTQ often delivers unsatisfactory accuracy in deployment.


QAT provides a principled solution by modeling quantization effects during training, allowing the model to mitigate quantization errors. While QAT shows strong robustness in low-bit regimes (below 8 bits), its prohibitive memory footprint from backpropagation limits applicability to large-scale models. Recent advances in zeroth-order (ZO) optimization, which estimate gradients using only forward passes (e.g., finite differences), significantly reduce memory usage by avoiding storage of activations and optimizer states, offering a promising path for memory-efficient fine-tuning. This naturally raises the question: \emph{Can ZO be combined with QAT to achieve high-quality low-bit quantization of LLMs, with memory efficiency comparable to inference?}

In this work, we propose ZeroQAT, the first end-to-end QAT framework supporting both low-bit weight and activation on-device quantization. As shown in Table~\ref{1_compare}, ZeroQAT reduces the resource burden of conventional QAT while mitigating the accuracy loss commonly seen in PTQ. Unlike prior methods that require massive computing resources~\citep{liu2023llm}, ZeroQAT updates model parameters using gradients estimated purely from forward passes, reducing memory usage to inference-level and making QAT feasible even on edge devices. It further integrates learnable weight clipping and activation transformations, optimized jointly with model parameters via ZO. Moreover, a lightweight variant is devised for further memory reduction. Experiments on both quantized pre-training and fine-tuning show that ZeroQAT consistently outperforms representative PTQ and QAT baselines. For instance, it improves accuracy by 5.1\% on average over five zero-shot tasks and 9.1\% on four downstream tasks under 2-bit weight-only quantization. More importantly, ZeroQAT overcomes the memory barrier of QAT, enabling training of 13B LLM on a single 8GB low-end GPU and even fine-tuning 6.7B model on OnePlus 12 smartphone. This capability makes end-to-end on-device QAT practical on resource-constrained edge devices.

In summary, our major contributions are as follows: 
1) We conduct a preliminary study of PTQ and QAT in low-bit pre-training and fine-tuning, revealing their weaknesses and causes of performance degradation. 2) We propose ZeroQAT, a novel end-to-end zeroth-order QAT framework that achieves high-quality low-bit quantization with inference-level cost. 3) We conduct extensive evaluation across LLM architectures, datasets, and quantization settings, showing consistent accuracy and memory improvements over prior PTQ and QAT baselines.
4) We validate on mobile devices, where ZeroQAT can fine-tune OPT-6.7B on OnePlus 12 smartphone while full-precision ZO fine-tuning is infeasible, demonstrating its practicality for real-world deployment.

\section{Background and Related Works}

\textbf{Quantization.} In this work, we mainly study the widely used uniform quantization~\citep{jacob2018quantization} for its better efficiency. The quantization process can be formulated by:
$$
\overline{\mathbf{X}}_{\text {INT}}=\text{clamp}(\left\lceil\frac{\mathbf{X}_{\mathrm{FP} 16}}{\Delta}\right\rfloor+z, Q_{N} , Q_{P})
$$
where $\mathbf{X}$ is the floating-point tensor, $\overline{\mathbf{X}}$ is the quantized counterpart, $\lceil \cdot \rfloor$ is rounding operation, $N$ is the target bit number, $\Delta$ and $z$ denote the step size and zero-point offset value respectively. For symmetric quantization, $Q_N=-2^{N-1}$, $Q_P=2^{N-1}-1$, $\Delta=\frac{\max(|\mathbf{X}|)}{Q_P}$ and $z=0$. Whereas for asymmetric quantization, $Q_N=0$, $Q_P=2^{N}-1$, $\Delta=\frac{\max(|\mathbf{X}|) - \min(|\mathbf{X}|)}{Q_P}$ and $z=-\lceil\frac{\min(|\mathbf{X}|)}{\Delta}\rfloor$. In this paper, we focus on the asymmetric quantization scheme for its better accuracy.

\textbf{Layer-wise calibration.} Layer-wise calibration strategy is the most widely adopted approach in approximation-based PTQ, because it is relatively efficient in terms of memory, computation, and data usage. The key idea is to minimize quantization error via reconstruction objectives. For example, the widely used layer-wise reconstruction loss minimizes the squared error, relative to the full precision layer output~\citep{li2021brecq, shao2023omniquant}. Formally, when both weights and activations are quantized, this can be stated as
\begin{equation}
    \arg \min_{\overline{W}^{l}}\|W^{l}X^{l}-\overline{W}^{l}\overline{X}^{l}\|_{2}^{2}.
\label{eq1}
\end{equation}
where $\overline{W},\overline{X}$ are the quantized version of weight and activations, $l$ indicates the $l$-th layer.

We present our related works section in Appendix~\ref{related_work}.

\section{Challenge of Existing Quantization Methods}
\label{analysis}

\subsection{Challenge of Existing Post Training Quantization Methods}

\textbf{Range-based PTQ.} These methods rescale or clip weight and activation ranges to reduce quantization error. They are computationally efficient and perform reasonably well at moderate bit-width. For example, SmoothQuant~\citep{xiao2023smoothquant} achieves a perplexity of 6.20 in W6A6 (i.e., quantization using 6 bits weight and 6 bits activation), close to the full-precision 5.47 (Table~\ref{table_com}). However, their limited adaptation to distributional and semantic characteristics leads to severe degradation at low bit-widths. For example, under W4A4, SmoothQuant’s perplexity deteriorates to 83.12 versus 5.47 in full precision.

\begin{figure}[t]
    \centering
        \begin{minipage}{0.33\linewidth}
		\centering
		\includegraphics[width=1\linewidth]{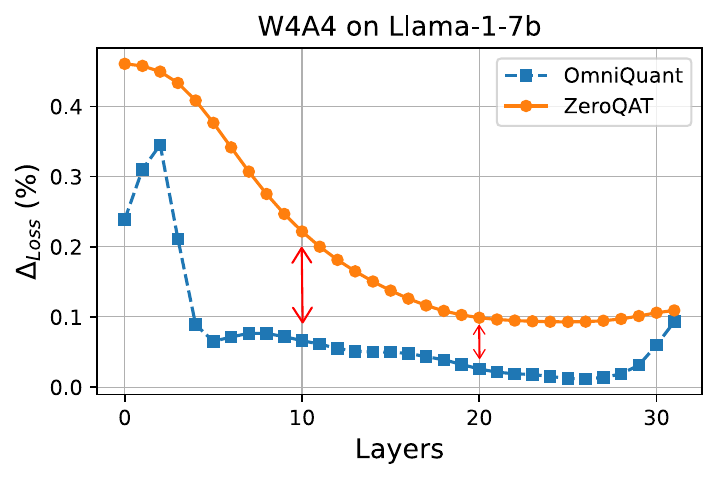}
	\end{minipage}
        \begin{minipage}{0.33\linewidth}
		\centering
		\includegraphics[width=1\linewidth]{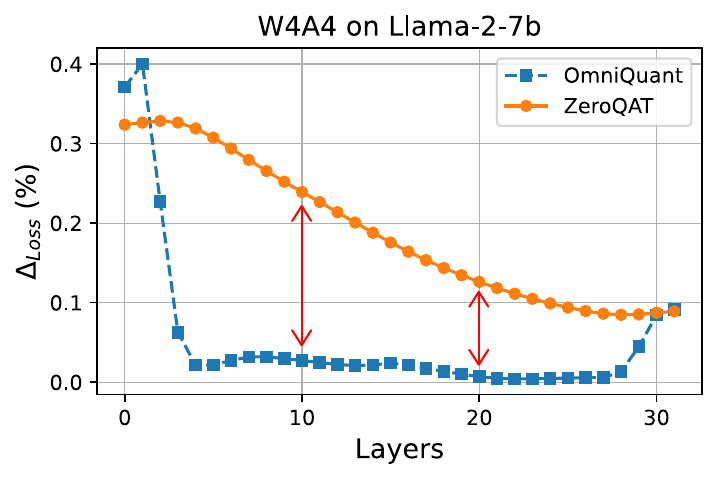}
	\end{minipage}
	\begin{minipage}{0.32\linewidth}
		\centering
		\includegraphics[width=1\linewidth]{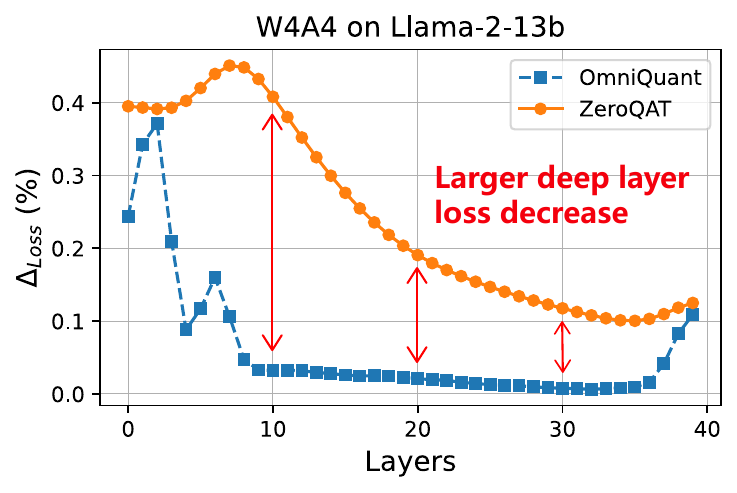}
	\end{minipage}
    
    \caption{Comparison of the layer-wise reconstruction loss reduction between OmniQuant~\citep{shao2023omniquant} and our method. X-axis is the index of layer, Y-axis measures the ratio of loss decrease.
    }
    \label{3_error_prop}
\end{figure}
\textbf{Approximation-based PTQ.} 
These methods narrow the gap between quantized and full-precision outputs via techniques such as learned rounding or reconstruction, adapting to data distributions and model behavior. However, there are two issues still remain and are exacerbated in low-bit quantization settings. 

\begin{table*}[th]
\centering
\small
\setlength{\tabcolsep}{4.2pt}
\caption{Results of applying different quantization methods on LLama2-7B. $^{\ddagger}$ indicates that the method is intrinsically not suitable for the setting; we report these results to illustrate its limitations.  }
\begin{tabular}{lccccccc}
\toprule
\multirow{2}{*}{Method} & \multirow{2}{*}{Category} & \multicolumn{3}{c}{Quantized Pre-training (PPL $\downarrow$)} & \multicolumn{3}{c}{Quantized Fine-tuning (Acc $\uparrow$)} \\
                        &                           & W6A6              & W2A16               & W4A4~              & W6A6              & W2A16g128              & W4A4             \\ \midrule
ZO (FP16)                    & -                         & \multicolumn{3}{c}{5.47~~~~~~}                                    & \multicolumn{3}{c}{66.0~~}                                  \\ 
Zero-shot                    & -                         & \multicolumn{3}{c}{-~~~~~~}                                    & \multicolumn{3}{c}{41.3~~}                                  \\ \midrule
SmoothQuant             & Range-based PTQ                 & 6.20              & 100.23$^{\ddagger}$              & 83.12~~             & 57.2             & 27.7$^{\ddagger}$               & 32.9$^{\ddagger}$             \\
OmniQuant               & Approx-based PTQ                 & 5.87              & 37.37               & 14.26~~             & 63.9              & 40.6$^{\ddagger}$               & 38.8$^{\ddagger}$             \\
EfficientQAT            & QAT                       & 5.60              & 33.40               & 76.32$^{\ddagger}$             & 66.4              & 45.4~~               & 28.6$^{\ddagger}$             \\ \midrule
ZeroQAT                 & QAT                       & 5.76              & 29.61               & 12.95~~             & 65.3              & 54.1~~               & 55.7             \\ \bottomrule
\end{tabular}
\label{table_com}
\end{table*}
\begin{wrapfigure}{r}{0.47\textwidth}
\vspace{-5pt}
\centering
    \vspace{-0.5cm}
    \includegraphics[width=0.47\textwidth]{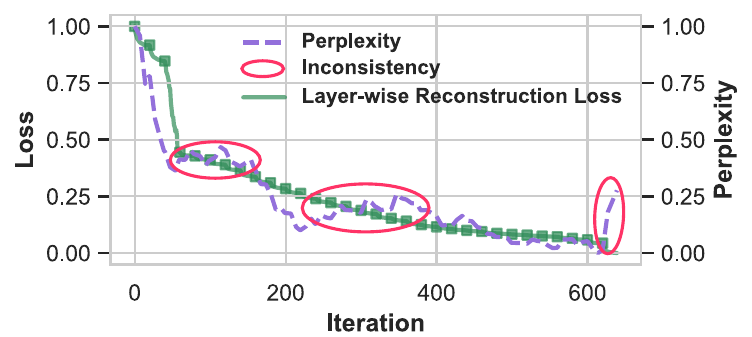}
    \caption{Objective inconsistency between the reconstruction loss used by approximation-based PTQ and final evaluation metrics.}
\label{3_inconsistency}
\vspace{-20pt}
\end{wrapfigure}
Here, we take a representative approximation-based PTQ method, OmniQuant~\citep{shao2023omniquant}, as an example to illustrate the two issues. 1) \underline{Cumulative error propagation.} To measure error propagation, we compute relative loss reduction across layers, $\Delta_{Loss} = (\mathcal{L}_{before} - \mathcal{L}_{after}) / \mathcal{L}_{before}$, where $\mathcal{L}_{before}$ and $\mathcal{L}_{after}$ denote reconstruction loss before and after optimization. As shown in Figure~\ref{3_error_prop}, OmniQuant improves shallow layers but benefits diminish in deeper ones, since each layer is optimized on activations already perturbed by prior quantization noise, making it increasingly difficult to suppress the reconstruction error. This cumulative error propagation constrains overall quantization quality.
2) \underline{Objective inconsistency.} OmniQuant uses layer-wise reconstruction loss (see Eq.\ref{eq1}) as training objective, assuming lower reconstruction loss is aligned with lower perplexity and better downstream accuracy. However, as shown in Figure~\ref{3_inconsistency}, this alignment does not always hold, in several training stages (highlighted in red), reconstruction loss decreases while perplexity fluctuates. This indicates that local layer-level improvements do not reliably translate into global task-level gains, making reconstruction loss a suboptimal proxy for end-to-end performance, especially under low-bit quantization.

\textbf{Failure on fine-tuned model.} 
When PTQ is applied to fine-tuned LLMs, it often fails to preserve task accuracy under low-bit settings. As shown in Table~\ref{table_com}, SmoothQuant maintains moderate accuracy at W6A6 (57.2\% vs. 66.0\% in FP16) but drops to 32.9\% at W4A4. Similarly, OmniQuant achieves 63.9\% at W6A6, close to FP16, yet falls to 38.8\% at W4A4 despite optimization-based techniques. These results indicate that while PTQ remains viable at moderate bit-widths, its effectiveness collapses under aggressive compression, in some cases nearly destroying task performance.

\subsection{Challenge of Existing Quantization-aware Training Methods}

Compared with PTQ, QAT offers stronger adaptation by compensating for quantization errors during training. However, its computational and memory costs are prohibitive for LLMs~\citep{liu2023llm}. To reduce this overhead, later works combine QAT with parameter-efficient methods such as LoRA~\citep{dettmers2023qlora, xu2023qa, li2023loftq} or update only quantizer parameters~\citep{chen2024efficientqat}, achieving competitive results in weight-only quantization. Yet their effectiveness drops in low-bit joint weight-activation settings, as shown in Table~\ref{table_com}, EfficientQAT maintains reasonable perplexity at W6A6 (5.60) and W2A16 (33.40), but degrades sharply at W4A4 (76.32), highlighting the difficulty of modeling dynamic activations.

Overall, although QAT methods can surpass PTQ in some settings, they have not consistently delivered strong results for both weight and activation quantization at aggressive bit-widths under realistic resource constraints. Recent efforts that combine zeroth-order (ZO) optimization with quantization primarily target weight-only scenarios \citep{zhou2025quzo, shang2025fine}, thus leaving the challenges of low-bit activation quantization unresolved. Motivated by this gap, we develop a ZO-based QAT framework that, to the best of our knowledge, is the first to maintain superior accuracy in both low-bit weight and activation settings.

\section{ZeroQAT}


In this section, we present ZeroQAT, which enables adaptive fine-tuning of both model and quantization parameters with low memory requirements. We employ zeroth-order stochastic gradient descent to estimate gradients solely from quantized model inference, and introduce adaptive smoothing and weight quantization strategies to improve low-bit performance. Unlike prior works that rely on hand-crafted or layer-wise local objectives, ZeroQAT jointly optimizes model and quantization parameters in an end-to-end manner, yielding superior accuracy. In addition, we propose a lightweight variant to further cut memory cost during quantized fine-tuning.

\subsection{Quantization-aware Zeroth-order Optimization}

Unlike conventional first-order optimization that computes gradients via backpropagation, zeroth-order (ZO) optimization estimates them using only function queries through finite differences~\citep{chen2023deepzero, liu2018zeroth, ye2018hessian}. This avoids storing activations, backward gradients, and optimizer states, greatly reducing memory costs in LLM fine-tuning. For each random direction, ZO requires only two forward passes to approximate the gradient, given a mini-batch $\mathcal{B}$:
\begin{equation}
\label{equ:1}
    \hat{\nabla} \mathcal{L}(\overline{W};\mathcal{B})=\frac{1}{q} \sum_{i=1}^q\left[\frac{\mathcal{L}\left(Q(W+\epsilon u_{i});\mathcal{B}\right)-\mathcal{L}\left(Q(W-\epsilon u_{i});\mathcal{B}\right)}{2 \epsilon} u_{i}\right],
\end{equation}
where $Q$ is the quantizer, $\overline{W}$ is the quantized parameters, $u_{i}\in \mathcal{N}(0, \mathbf{I})$ is a random perturbation, $q$ is the number of directions, and $\epsilon > 0$ is a small scalar.

Following QAT practice, we maintain full-precision weights while using their quantized counterparts in forward passes. Unlike FO-QAT, ZeroQAT does not require the straight-through estimator (STE)~\citep{bengio2013estimating}, since gradients are estimated directly via zeroth-order finite differences, bypassing the non-differentiability of the quantizer. Given a learning rate $\eta$ and a mini-batch $\mathcal{B}_{t}$ at iteration $t$, the update rule becomes:
\begin{equation}
\label{equ:2}
    W_{t+1} = W_{t} - \eta \hat{\nabla} \mathcal{L}(\overline{W}_{t};\mathcal{B}_{t}).
\end{equation}

In ZeroQAT, the ZO estimator remains unbiased with respect to the gradient of a smoothed quantized objective, which ensures standard convergence guarantees. In contrast, QAT methods based on the STE rely on a hand-crafted surrogate gradient that introduces inherent bias. This bias becomes particularly severe in low-bit regimes, where the true smoothed gradients are already small but STE still produces large surrogate updates, leading to unstable or suboptimal convergence. A formal analysis and quantitative bounds on this bias are provided in Appendix~\ref{theory}.


\subsection{Adaptive Outlier Smoothing and Weight Quantizer}

\textbf{Adaptive outlier smoothing.} Due to the quantization error caused by the extreme activation outliers in specific channels, which expand the dynamic range and degrade quantization precision for normal activation values, the previous methods~\citep{xiao2023smoothquant,wei2022outlier, shao2023omniquant} migrate the difficulty of activation quantization to weight quantization with a mathematically equivalent smoothing, as the weights are generally more uniform and thus easier to be quantized. However, relying on either hand-crafted smoothing parameters or layer-wise calibrated smoothing often results in suboptimal performance, due to the lack of end-to-end joint optimization.

In contrast, our QAT framework enables end-to-end joint optimization of smoothing parameters along with model parameters, thereby improving consistency and reducing quantization error. Inspired by previous works such as SmoothQuant~\citep{xiao2023smoothquant} and Outlier Suppression+~\citep{wei2022outlier}, which statically manipulate activation distributions via channel-wise scaling and shifting, we adapt these techniques into a jointly optimized framework to dynamically mitigate activation outliers during training, providing an effective solution for the outlier issue. Specifically, we represent the computation of a linear layer as:
\begin{equation}
\mathbf{Y}=\mathbf{X W}+\mathbf{B}=[\underbrace{(\mathbf{X}-\delta) \oslash s}_{\bar{\mathbf{X}}}] \cdot[\underbrace{s \odot \mathbf{W}}_{\bar{\mathbf{W}}}]+[\underbrace{\mathbf{B}+\delta \mathbf{W}}_{\bar{\mathbf{B}}}]
\end{equation}
where $\mathbf{X}\in \mathbb{R}^{T\times D_{1}}$, the $T$ is the sequence length, $\mathbf{W} \in \mathbb{R}^{D_{1}\times D_{2}}$ is the weight matrix and $\mathbf{B} \in \mathbb{R}^{1\times D_{2}}$ is the bias. Here, $s$ and $\delta$ are learnable channel-wise scaling and shifting parameters, jointly optimized during training, $\bar{\mathbf{X}}, \bar{\mathbf{W}}$ and $\bar{\mathbf{B}}$ represent the smoothed activation, weight and bias, respectively, $\oslash$ and $\odot$ are element-wise division and multiplication.

\textbf{Adaptive weight quantizer.} As demonstrated by previous work, some weights play a significant role in the performance of the model, naive uniform quantization can cause significant performance degradation. Similar to previous QAT methods that adopt learnable step size and zero-point parameters~\citep{esser2019learned,bhalgat2020lsq+}, we also conduct weight quantization with the learnable step size and offset. However, due to the activation-weight smoothing introduced in our framework, the weight distributions in some channels become skewed, resembling the activation distributions and deviating from the typically assumed uniformity. Therefore, we jointly learn clipping thresholds to adaptively determine the optimal clipping range for weights. 

Specifically, considering asymmetric quantization, the quantization of weights as formulated by
\begin{equation}
    \overline{W} = \text{clamp}(\lceil\frac{W}{\Delta}\rfloor+z, \alpha \cdot Q_P, \beta \cdot Q_P)
\end{equation}
where $\Delta$ and $z$ are learnable step size and zero-point, respectively, initialized based on the default asymmetric quantization scheme. $\alpha$ and $\beta$ are learnable clipping coefficients (with $\alpha < \beta$), and $Q_P$ denotes the maximum positive quantization level. Intuitively, for weights with near-uniform distributions after smoothing, $\alpha$ and $\beta$ converge to similar values, resulting in a tight clipping range that preserves precision. In contrast, for biased weight distributions, $\alpha$ and $\beta$ adapt to asymmetrically clip the dynamic range, thereby mitigating the impact of outliers.


\begin{wrapfigure}{r}{0.3\textwidth}
\vspace{-10pt}
\centering
    \vspace{2pt}
    \includegraphics[width=0.3\textwidth]{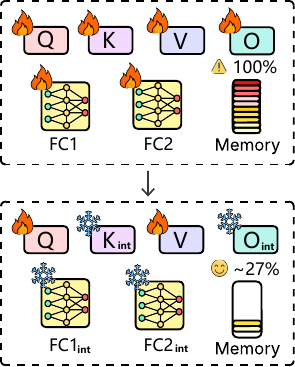}
    \caption{lightweight ZeroQAT for fine-tuning.}
\label{3_quant_illu}
\vspace{-20pt}
\end{wrapfigure}

\subsection{Lightweight ZeroQAT for memory reduction in Quantized Fine-tuning}

We further propose a lightweight variant of ZeroQAT designed specifically for quantized fine-tuning, to substantially reduce the fine-tuning memory footprint. It is worth noting that this strategy is effective only in fine-tuning, applying it to quantized pre-training leads to noticeable performance degradation (see Appendix~\ref{abl_lightweight}).

Unlike backpropagation-based methods, where memory is dominated by weights, activations, and optimizer states, ZeroQAT’s cost mainly comes from the parameters actively updated during fine-tuning. Pre-quantizing the entire model could further reduce memory, but this fails in practice. As small ZO perturbations are rounded away while large ones destabilize training, making naive full-model pre-quantization unsuitable.

To overcome this, we introduce a lightweight variant. Most parameters are frozen and pre-quantized, while only the query (Q) and value (V) matrices of attention layers are kept in full precision, as illustrated in Figure~\ref{3_quant_illu}. Thus, memory use comes from the full-precision Q and V plus quantized frozen weights. This design substantially reduces the fine-tuning footprint while retaining sufficient trainable capacity for adaptation. This enables fine-tuning large models such as OPT-13B under low-bit settings with memory as low as 6.8 GB (in Table~\ref{Memory_time}), far lighter than existing QAT baselines.

\section{Experiment}

We present a comprehensive evaluation of ZeroQAT, reporting results on both quantized pre-training and quantized fine-tuning (Sections~\ref{Intrinsic} and~\ref{ft}), followed by ablation studies to assess the contributions of different design (Section~\ref{ablation_main}). We then provide an efficiency analysis including memory and speed (Section~\ref{efficiency}). Hyperparameter settings are detailed in Appendix~\ref{hyper_setting}. GPU-end experiments are conducted on an NVIDIA A100, and device-end experiments are conducted on a OnePlus 12 smartphone with a Snapdragon 8 Gen 3 SoC and 16GB RAM. All results are averaged over three runs.

\subsection{ZeroQAT for Quantized Pre-training}
\label{Intrinsic}

\textbf{Training and evaluation.} For the parameters of smoothing and weight clipping, we leverage reconstruction loss for a lightweight initialization, and then jointly train with the model via ZO. For LLama-series weight-only quantization, we retain only weight clipping. Pre-training uses mixed segments from WikiText2 and C4, with perplexity measured on three pretraining context datasets. We further evaluate zero-shot accuracy on five datasets under GPTQ settings with lm-eval-harness. More details including baselines are provided in Appendix~\ref{qpt_setting}.

\begin{table*}[th]
\centering
\small
\setlength{\tabcolsep}{5pt}
\renewcommand{\arraystretch}{1}
\caption{Weight-only and weight-activation quantization results of Llama-series models on two datasets: WikiText2 (WIKI), and C4. The results on OPT models is reported in Table~\ref{tab:opt_quant_results}.}
\begin{tabular}{llcccccccc}
\toprule
\multicolumn{2}{l}{\textbf{Llama / PPL $\downarrow$}}                        & \multicolumn{2}{c}{\textbf{Llama1-7B}}                                          & \multicolumn{2}{c}{\textbf{Llama1-13B}}                                         & \multicolumn{2}{c}{\textbf{Llama2-7B}}                                          & \multicolumn{2}{c}{\textbf{Llama2-13B}}                                         \\
\multicolumn{2}{l}{Task}                                                     & WIKI                                   & C4                                     & WIKI                                   & C4                                     & WIKI                                   & C4                                     & WIKI                                   & C4                                     \\ \midrule
FP16                                       & -                               & 5.68                                   & 7.08                                   & 5.09                                   & 6.61                                   & 5.47                                   & 6.97                                   & 4.88                                   & 6.46                                   \\ \midrule
                                           & RTN                             & \multicolumn{1}{l}{1.1e5}              & \multicolumn{1}{l}{1.3e5}              & \multicolumn{1}{l}{6.8e4}              & \multicolumn{1}{l}{5.6e4}              & \multicolumn{1}{l}{3.8e4}              & \multicolumn{1}{l}{4.8e4}              & \multicolumn{1}{l}{5.6e4}              & \multicolumn{1}{l}{7.2e4}              \\
                                           & GPTQ                            & 5.6e4                                  & 689.13                                 & 5.5e3                                  & 6.97                                   & 7.7e3                                  & NAN                                    & 2.1e3                                  & 323.12                                 \\
                                           & OmniQuant                       & 15.47                                  & 24.89                                  & 13.21                                  & 18.31                                  & 37.37                                  & 90.64                                  & 17.21                                  & 26.76                                  \\
\multirow{-4}{*}{W2A16}                    & \cellcolor[HTML]{EFEFEF}ZeroQAT & \cellcolor[HTML]{EFEFEF}\textbf{12.85} & \cellcolor[HTML]{EFEFEF}\textbf{17.47} & \cellcolor[HTML]{EFEFEF}\textbf{10.29} & \cellcolor[HTML]{EFEFEF}\textbf{15.37} & \cellcolor[HTML]{EFEFEF}\textbf{29.61} & \cellcolor[HTML]{EFEFEF}\textbf{55.34} & \cellcolor[HTML]{EFEFEF}\textbf{15.97} & \cellcolor[HTML]{EFEFEF}\textbf{24.68} \\ \midrule
\multicolumn{1}{c}{}                       & SmoothQuant                     & 6.03                                   & 7.47                                   & 5.42                                   & 6.97                                   & 6.20                                   & 7.76                                   & 5.18                                   & 6.76                                   \\
\multicolumn{1}{c}{}                       & OmniQuant                       & 5.96                                   & 7.43                                   & \textbf{5.28}                          & \textbf{6.84}                          & 5.87                                   & \textbf{7.48}                          & 5.14                                   & 6.74                                   \\
\multicolumn{1}{c}{\multirow{-3}{*}{W6A6}} & \cellcolor[HTML]{EFEFEF}ZeroQAT & \cellcolor[HTML]{EFEFEF}\textbf{5.85}  & \cellcolor[HTML]{EFEFEF}\textbf{7.47}  & \cellcolor[HTML]{EFEFEF}5.96           & \cellcolor[HTML]{EFEFEF}7.01           & \cellcolor[HTML]{EFEFEF}\textbf{5.76}  & \cellcolor[HTML]{EFEFEF}8.81           & \cellcolor[HTML]{EFEFEF}\textbf{5.10}  & \cellcolor[HTML]{EFEFEF}\textbf{6.70}  \\ \hline
\multicolumn{1}{c}{}                       & SmoothQuant                     & 25.25                                  & 32.32                                  & 40.05                                  & 47.18                                  & 83.12                                  & 77.27                                  & 35.88                                  & 43.19                                  \\
\multicolumn{1}{c}{}                       & OmniQuant                       & 11.26                                  & \textbf{14.51}                         & 10.87                                  & 13.78                                  & 14.26                                  & 18.02                                  & 12.30                                  & 14.55                                  \\
\multicolumn{1}{c}{\multirow{-3}{*}{W4A4}} & \cellcolor[HTML]{EFEFEF}ZeroQAT & \cellcolor[HTML]{EFEFEF}\textbf{11.10} & \cellcolor[HTML]{EFEFEF}14.78          & \cellcolor[HTML]{EFEFEF}\textbf{10.04} & \cellcolor[HTML]{EFEFEF}\textbf{12.65} & \cellcolor[HTML]{EFEFEF}\textbf{12.95} & \cellcolor[HTML]{EFEFEF}\textbf{16.73} & \cellcolor[HTML]{EFEFEF}\textbf{10.41} & \cellcolor[HTML]{EFEFEF}\textbf{12.43} \\ \bottomrule
\end{tabular}
\label{tab:llama_quant_results}
\end{table*}

\textbf{Perplexity Results.} We target to examine the intrinsic language modeling performance of the quantized model. The perplexity results of LLama-series and OPT-series models are presented in Table~\ref{tab:llama_quant_results} and Table~\ref{tab:opt_quant_results} respectively.
Under the rather easier quantization setting W6A6, the baselines and our method achieve similar, almost lossless performance compared with full precision, absolute perplexity gap is smaller than one. More importantly, under the hard quantization setting W2A16(g128) and W4A4, because our method has better adaptation capability by enabling fine-tuning of the whole model, one can see that ZeroQAT consistently outperforms the baseline methods, yielding lower perplexity across both model families and datasets. This highlights the effectiveness of ZeroQAT in preserving model quality under aggressive quantization.


\begin{table*}[ht]
\centering
\small
\setlength{\tabcolsep}{2.6pt}
\renewcommand{\arraystretch}{1}
\caption{Weight-only and weight-activation quantization results of LLama models. This table reports the accuracy of 5 zero-shot tasks. Results of  Llama-1-13B are shown in Table~\ref{tab:llama_quant_acc}.}
\begin{tabular}{lllcccccc}
\hline
\textbf{Llama / Acc $\uparrow$} & \textbf{\#Bits} & \textbf{Method}                 & \textbf{PIQA}                 & \textbf{ARC-e}                & \textbf{ARC-c}                & \textbf{HellaSwag}            & \textbf{Winogrande}           & \textbf{Avg.}                          \\ \toprule
                                & FP16            & -                               & 77.47                         & 72.38                         & 41.46                         & 73.00                         & 67.07                         & 65.26                                  \\
                                & W2A16           & RTN                             & 47.33                         & 28.17                         & 25.17                         & 25.10                         & 47.50                         & 34.67                                  \\
                                & W2A16           & GPTQ                            & 57.38                         & 36.62                         & 25.00                         & 42.50                         & 49.38                         & 40.35                                  \\
                                & W2A16           & EfficientQAT                    & 62.25                         & 48.12                         & 27.75                         & 47.50                         & 53.37                         & 47.65                                  \\
                                & W2A16           & \cellcolor[HTML]{E0E0E0}ZeroQAT & \cellcolor[HTML]{E0E0E0}68.25 & \cellcolor[HTML]{E0E0E0}53.87 & \cellcolor[HTML]{E0E0E0}27.62 & \cellcolor[HTML]{E0E0E0}51.62 & \cellcolor[HTML]{E0E0E0}57.38 & \cellcolor[HTML]{E0E0E0}\textbf{51.75} \\
                                & W4A4            & SmoothQuant                     & 49.80                         & 30.40                         & 25.80                         & 27.40                         & 48.00                         & 38.41                                  \\
                                & W4A4            & LLM-QAT                         & 51.50                         & 32.57                         & 28.63                         & 31.10                         & 51.90                         & 41.39                                  \\
                                & W4A4            & LLM-QAT+SQ                      & 55.93                         & 35.90                         & 30.60                         & 44.80                         & 50.60                         & 46.72                                  \\
                                & W4A4            & OS+                             & 62.70                         & 39.20                         & 32.64                         & 47.89                         & 52.96                         & 49.60                                  \\
                                & W4A4            & OmniQuant                       & 67.38                         & 53.87                         & 30.63                         & 53.12                         & 55.25                         & 52.15                                  \\
\multirow{-11}{*}{Llama-1-7B}   & W4A4            & \cellcolor[HTML]{E0E0E0}ZeroQAT & \cellcolor[HTML]{E0E0E0}66.98 & \cellcolor[HTML]{E0E0E0}54.12 & \cellcolor[HTML]{E0E0E0}32.19 & \cellcolor[HTML]{E0E0E0}57.85 & \cellcolor[HTML]{E0E0E0}54.37 & \cellcolor[HTML]{E0E0E0}\textbf{53.11} \\ \bottomrule
\end{tabular}

\label{tab:llama_quant_acc_main}
\end{table*}

\textbf{Zero-shot Accuracy Results.} Moreover, Table \ref{tab:llama_quant_acc_main} reports the zero-shot results of LLama-7B on five downstream datasets 
evaluated by accuracy. As expected, the FP16 setting achieves the highest average accuracy, serving as the upper bound. Under both the W2A16 and W4A4 configurations, ZeroQAT consistently outperforms other quantization approaches, yielding higher average accuracy across both model scales, for instance, significantly increasing 5.1\% accuracy in 2-bit weight-only quantization. This result demonstrates that ZeroQAT maintains strong task generalization even when quantization is pushed to low-bit precision. 
\begin{table*}[ht]
\centering
\small
\setlength{\tabcolsep}{1.0pt}
\caption{Experimental results of quantized fine-tuning on OPT models. 
}
\begin{tabular}{llcccccccccccc}
\toprule
\multicolumn{2}{l}{\textbf{OPT / Acc $\uparrow$}}                            & \multicolumn{4}{c}{\textbf{OPT-2.7B}}                                                                                                                                              & \multicolumn{4}{c}{\textbf{OPT-6.7B}}                                                                                                                                              & \multicolumn{4}{c}{\textbf{OPT-13B}}                                                                                                                          \\
\multicolumn{2}{l}{Task}                                                     & SST-2                                 & CB                                    & SQuAD                                 & DROP                                                       & SST-2                                 & CB                                    & SQuAD                                 & DROP                                                       & SST-2                                 & CB                                    & SQuAD                                 & DROP                                  \\ \midrule
Zero-shot                                  &                                 & 56.3                                  & 50.0                                  & 29.8                                  & \multicolumn{1}{c|}{10.0}                                  & 64.2                                  & 50.0                                  & 37.9                                  & \multicolumn{1}{c|}{13.1}                                  & 58.8                                  & 46.4                                  & 46.2                                  & 14.6                                  \\
FP16 (ZO)                                  & -                               & 90.0                                  & 69.6                                  & 68.7                                  & \multicolumn{1}{c|}{22.9}                                  & 90.2                                  & 71.4                                  & 76.0                                  & \multicolumn{1}{c|}{26.4}                                  & 91.4                                  & 67.9                                  & 84.7                                  & 30.9                                  \\ \midrule
                                           & RTN                             & 44.4                                  & 44.6                                  & 0.0                                   & \multicolumn{1}{c|}{0.0}                                   & 59.2                                  & 50.0                                  & 0.0                                   & \multicolumn{1}{c|}{0.0}                                   & 53.5                                  & 50.0                                  & 0.0                                   & 0.0                                   \\
                                           & QLoRA                           & 61.2                                  & 51.8                                  & 0.0                                   & \multicolumn{1}{c|}{8.2}                                   & 64.8                                  & 58.9                                  & 0.0                                   & \multicolumn{1}{c|}{0.0}                                   & 63.8                                  & 69.6                                  & 0.0                                   & 0.0                                   \\
                                           & OmniQuant                       & 72.8                                  & 55.4                                  & 16.5                                  & \multicolumn{1}{c|}{4.4}                                   & 61.6                                  & 55.3                                  & 27.7                                  & \multicolumn{1}{c|}{12.6}                                  & 62.6                                  & 29.8                                  & 38.8                                  & 16.4                                  \\
                                           & EfficientQAT                    & 76.6                                  & 57.1                                  & 29.0                                  & \multicolumn{1}{c|}{12.6}                                  & 75.6                                  & 58.9                                  & 32.4                                  & \multicolumn{1}{c|}{14.6}                                  & 81.2                                  & 62.5                                  & 46.7                                  & 16.9                                  \\
\multirow{-5}{*}{W2A16g128}                & \cellcolor[HTML]{EFEFEF}ZeroQAT & \cellcolor[HTML]{EFEFEF}\textbf{85.2} & \cellcolor[HTML]{EFEFEF}\textbf{62.5} & \cellcolor[HTML]{EFEFEF}\textbf{36.9} & \multicolumn{1}{c|}{\cellcolor[HTML]{EFEFEF}\textbf{16.6}} & \cellcolor[HTML]{EFEFEF}\textbf{84.8} & \cellcolor[HTML]{EFEFEF}\textbf{67.8} & \cellcolor[HTML]{EFEFEF}\textbf{46.7} & \multicolumn{1}{c|}{\cellcolor[HTML]{EFEFEF}\textbf{18.9}} & \cellcolor[HTML]{EFEFEF}\textbf{85.6} & \cellcolor[HTML]{EFEFEF}\textbf{64.2} & \cellcolor[HTML]{EFEFEF}\textbf{59.6} & \cellcolor[HTML]{EFEFEF}\textbf{22.9} \\ \midrule
\multicolumn{1}{c}{}                       & SmoothQuant                     & 56.0                                  & 55.4                                  & 7.6                                   & \multicolumn{1}{c|}{5.4}                                   & 58.8                                  & 50.0                                  & 12.8                                  & \multicolumn{1}{c|}{6.2}                                   & 57.5                                  & 52.4                                  & 13.4                                  & 7.1                                   \\
\multicolumn{1}{c}{}                       & OmniQuant                       & 59.2                                  & 60.7                                  & 22.1                                  & \multicolumn{1}{c|}{6.7}                                   & 61.2                                  & 48.2                                  & 24.7                                  & \multicolumn{1}{c|}{11.7}                                  & 59.2                                  & 50.0                                  & 28.8                                  & 13.5                                  \\
\multicolumn{1}{c}{\multirow{-3}{*}{W4A4}} & \cellcolor[HTML]{EFEFEF}ZeroQAT & \cellcolor[HTML]{EFEFEF}\textbf{87.8} & \cellcolor[HTML]{EFEFEF}\textbf{66.1} & \cellcolor[HTML]{EFEFEF}\textbf{47.8} & \multicolumn{1}{c|}{\cellcolor[HTML]{EFEFEF}\textbf{13.3}} & \cellcolor[HTML]{EFEFEF}\textbf{87.9} & \cellcolor[HTML]{EFEFEF}\textbf{64.3} & \cellcolor[HTML]{EFEFEF}\textbf{51.1} & \multicolumn{1}{c|}{\cellcolor[HTML]{EFEFEF}\textbf{19.3}} & \cellcolor[HTML]{EFEFEF}\textbf{88.2} & \cellcolor[HTML]{EFEFEF}\textbf{62.1} & \cellcolor[HTML]{EFEFEF}\textbf{62.4} & \cellcolor[HTML]{EFEFEF}\textbf{24.3} \\ \bottomrule
\end{tabular}
\vspace{-20pt}
\label{tab:fine-tuned}
\end{table*}

\subsection{ZeroQAT for Quantized Fine-tuning}
\label{ft}

\textbf{Training and Evaluation.} Following prior work, we fine-tune models on a small subset of Alpaca and evaluate across multiple benchmarks, including commonsense reasoning, classification, and question answering tasks. We adopt a few-shot fine-tuning protocol with fixed quantization parameters and report averaged results over three runs. Full experimental details and baselines are provided in Appendix~\ref{qft_setting}.

\textbf{Results}. We evaluate quantized fine-tuning on OPT models (2.7B, 6.7B, and 13B) across two classification tasks (SST-2, CB) and two QA generation tasks (SQuAD, DROP). For PTQ methods such as SmoothQuant and OmniQuant, we first fine-tune the models in full precision using ZO to ensure comparable starting points, and then apply the corresponding quantization method. In contrast, QAT methods, including ZeroQAT, directly produce quantized models during fine-tuning without the need for a separate PTQ stage.

\begin{wraptable}{r}{0.47\textwidth} 
\centering
\vspace{-10pt}
\small
\caption{Averaged accuracy over 5 datasets after fine-tuning. Evaluation on MMLU is presented in Appendix~\ref{mmlu_eval}}
\begin{tabular}{lccc}
\toprule
\textbf{Method} & \textbf{\#Bits} & \textbf{7B} & \textbf{13B} \\
\midrule
-              & FP               & 67.0        & 69.3         \\ \midrule
QLoRA w/GPTQ    & W2A16         & 31.8        & 32.4         \\
QA-LoRA         & W2A16         & 34.6        & 37.3         \\
IR-QLoRA        & W2A16         & 34.4        & 36.3         \\
PEQA            & W2A16         & 35.2        & 34.8         \\
EfficientQAT    & W2A16         & 49.1        & 52.1         \\
\rowcolor[gray]{.92}ZeroQAT         & W2A16            & \textbf{53.9}        & \textbf{55.7}         \\
\midrule
SmoothQuant       & W4A4             & 37.4        & 41.6         \\
OmniQuant       & W4A4             & 52.3        & 54.2         \\
\rowcolor[gray]{.92}ZeroQAT         & W4A4             & \textbf{54.8}        & \textbf{57.4}         \\
\bottomrule
\end{tabular}
\vspace{-10pt} 
\label{tab:zeroqat}
\end{wraptable}

The results are summarized in Table~\ref{tab:fine-tuned}. Fine-tuning adapts model parameters to narrow task-specific optima~\citep{dong2021should}, which increases their sensitivity to quantization noise. Consequently, less adaptive PTQ methods suffer from severe degradation in low-bit settings. By comparison, ZeroQAT consistently delivers higher accuracy across all tasks and model scales, in some cases approaching FP16 performance. For example, under the W4A4 setting, ZeroQAT achieves about 88\% accuracy on SST-2 across the three OPT models, whereas baseline methods remain around 60\%.
We also fine-tuned LLama-1 models on Alpaca, with results shown in Table~\ref{tab:zeroqat}. ZeroQAT again outperforms prior methods across different bit-widths and model sizes. For instance, when quantizing LLama-7B and LLama-13B weights to 2 bits, ZeroQAT achieves absolute accuracy improvements of 4.8\% and 3.6\% over the best baseline EfficientQAT, illustrating the effectiveness of our approach.

\subsection{Ablation Study}
\label{ablation_main}

In this section, we conduct ablation study to examine the effectiveness of the strategies adopted in our method. More experiments are shown in Appendix~\ref{ablation}.

\textbf{Effect of initialization for Smoothing Parameters.} 
We initialize the smoothing parameters by minimizing reconstruction loss before applying ZO, to examine the impact of initialization quality, we conduct an ablation study by varying the number of initialization epochs, as reported in Table~\ref{abl_init}. The results show that initialization has a clear effect on performance. With 0 epochs of initialization, performance drops noticeably across different models, while additional epochs (e.g., 20) can further improve accuracy. However, considering both performance gains and computational cost, we adopt two epochs as the default initialization setting.

\subsection{Efficiency of ZeroQAT}
\label{efficiency}

To highlight the advantage that our method enables generating a quantized and fine-tuned model in a lightweight end-to-end pipeline, we evaluate the efficiency of ZeroQAT on both a GPU server and a mobile device to demonstrate its practicality across deployment scenarios.

\begin{table*}[ht]
\centering
\small
\setlength{\tabcolsep}{5.0pt}
\caption{Memory consumption and wallclock time per update during quantized pre-training under the W2A16g128 setting. Since ZeroQAT only stores the weights in memory, its memory usage remains unaffected by batch size.
}
\begin{tabular}{lcccccccc}
\toprule
\multirow{2}{*}{\textbf{Method}} & \multicolumn{2}{c}{\textbf{OPT-1.3B}} & \multicolumn{2}{c}{\textbf{OPT-2.7B}} & \multicolumn{2}{c}{\textbf{OPT-6.7B}} & \multicolumn{2}{c}{\textbf{OPT-13B}} \\
                                 & Memory            & Time              & Memory            & Time              & Memory             & Time             & Memory             & Time            \\ \midrule                                         
\multicolumn{9}{c}{Quantized Pre-training (avg sequence length = 2048)}                                                                                                                         \\ \midrule
LLM-QAT (bsz=1)                  & 28.8GB            & 1.00s            & 58.6GB            & 1.64s            & $\sim$166GB        & $\sim$5.0s     & $\sim$337GB        & $\sim$15.5s   \\
OmniQuant (bsz=1)                & 6.1GB             & 0.92s            & 7.4GB             & 1.49s            & 12.3GB             & 2.65s           & 16.8GB             & 4.77s          \\
\rowcolor[gray]{.92}ZeroQAT (bsz=1)                  & 3.1GB             & 0.58s            & 6.1GB             & 0.98s            & 14.2GB             & 1.77s           & 26.6GB             & 3.12s          \\
OmniQuant (bsz=4)                & 14.7GB            & 2.55s            & 16.2GB            & 4.03s            & 22.5GB             & 6.35s           & 28.5GB             & 11.81s         \\
\rowcolor[gray]{.92}ZeroQAT (bsz=4)                  & 3.1GB             & 1.72s            & 6.1GB             & 2.76s            & 14.2GB             & 4.48s           & 26.6GB             & 7.74s          \\ \midrule
\multicolumn{9}{c}{Quantized Fine-tuning (max sequence length = 384)}                                                                                                                           \\ \midrule
EfficientQAT (bsz=1)                & 2.1GB             & 0.13s            & 3.1GB             & 0.21s            & 4.4GB             & 0.36s           & 7.3GB             & 0.67s          \\
\rowcolor[gray]{.92}ZeroQAT (bsz=1)                  & 0.8GB             & 0.04s            & 1.5GB             & 0.07s            & 3.7GB             & 0.18s           & 6.8GB             & 0.32s          \\
EfficientQAT (bsz=16)                & 5.9GB            & 0.69s            & 8.1GB            & 1.10s            & 11.9GB             & 1.70s           & 17.2GB             & 3.26s         \\
\rowcolor[gray]{.92}ZeroQAT (bsz=16)                  & 0.8GB             & 0.31s            & 1.5GB             & 0.53s            & 3.7GB             & 0.94s           & 6.8GB             & 1.73s         \\ \bottomrule
\end{tabular}
\vspace{-10pt}
\label{Memory_time}
\end{table*}
\textbf{Server-side Efficiency.} 
Table~\ref{Memory_time} compares memory requirements and wallclock time per update across QAT and PTQ methods. For quantized pre-training, ZeroQAT reduces memory usage by 89-92\% relative to the costly LLM-QAT, while also accelerating training. Compared to the PTQ method OmniQuant, ZeroQAT offers clear advantages, for instance, it halves memory use (OPT-1.3B: 6.1GB to 3.1GB) and achieves about 1.5× faster updates (OPT-2.7B: 1.49s to 0.98s). For quantized fine-tuning, ZeroQAT’s memory-efficient design requires storing only weights, making usage independent of batch size. Against EfficientQAT, it consistently saves memory and improves throughput, especially on smaller models such as OPT-1.3B, reducing memory by 86\% (5.9GB to 0.8GB) and wallclock time by 55\% (0.69s to 0.31s) with the same batch size.

\begin{table}[htbp]
\centering
\small
\caption{
Evaluation of memory consumption and speed on a OnePlus 12 smartphone under W4A4 quantization. Prompts of 384 tokens are used in inference, and OOM indicates out of memory.
}
\begin{tabular}{llcccccc}
\toprule
\multirow{2}{*}{Stage}       & \multirow{2}{*}{Metrics} & \multicolumn{2}{c}{\textbf{OPT-1.3B}}         & \multicolumn{2}{c}{\textbf{OPT-2.7B}} & \multicolumn{2}{c}{\textbf{OPT-6.7B}} \\
                             &                          & FP16                  & ZeroQAT               & FP16             & ZeroQAT            & FP16             & ZeroQAT            \\ \midrule
\multirow{3}{*}{Fine-tuning} & Latency                  & 11.2s                 & 7.8s                  & 19.6s            & 12.3s              & /                & 29.1s              \\
                             & Weight memory            & 2.6GB                  & 0.9GB                  & 5.4GB             & 1.8GB               & 13.4GB                & 4.6GB               \\
                             & Running memory           & 3.5GB                  & 1.2GB                  & 8.1GB             & 2.6GB               & OOM              & 6.4GB               \\ \midrule
\multirow{2}{*}{Inference}   & Token / s                  & 10.9 & 15.4 & 7.58             & 11.0               & 3.13             & 4.76               \\
                             & Speed up                 & 1.0$\times$                   & 1.41$\times$                   & 1.0$\times$              & 1.45$\times$                & 1.0$\times$              & 1.52$\times$                \\ \bottomrule
\end{tabular}
\label{tab:eva-on-device}
\end{table}

\textbf{On-device Efficiency.} 
Table~\ref{tab:eva-on-device} compares FP16 baseline with ZeroQAT under W4A4 for OPT-1.3B, 2.7B, and 6.7B models. The results were collected on a OnePlus 12 smartphone with a Snapdragon 8 Gen 3 SoC and 16GB RAM. ZeroQAT reduces fine-tuning latency by 30\% and 37\% for OPT-1.3B and OPT-2.7B, respectively, while cutting running memory from 3.5GB to 1.2GB and from 8.1GB to 2.6GB. For OPT-6.7B, FP16 fine-tuning is infeasible (OOM), whereas ZeroQAT runs within 6.4GB memory with 29.1s latency. During inference, ZeroQAT further achieves 1.41$\times$-1.52$\times$ higher token throughput, demonstrating its practicality on resource-constrained devices.


\section{Conclusion}

In this paper, we proposed ZeroQAT, a zeroth-order-based quantization-aware training framework supporting both weight and activation quantization under extremely low bit-widths. We further introduced adaptive smoothing and an adaptive weight quantizer to reduce errors, and a lightweight variant that freezes and quantizes part of the model to lower fine-tuning memory cost. Experiments on quantized pre-training, fine-tuning, and on-device deployment show that ZeroQAT consistently outperforms PTQ and QAT baselines in both accuracy and efficiency, and even enables fine-tuning large LLMs on OnePlus 12 smartphone under strict memory constraints.

\bibliography{iclr2026_conference}
\bibliographystyle{iclr2026_conference}

\appendix
\newpage
\appendix

\renewcommand{\thetable}{\thesection.\arabic{table}}
\counterwithin{table}{section}

\section{Claim of LLM Usage}

In this work, large language models (LLMs) were used solely as a general-purpose writing assistant. Their role was limited to correcting grammar, fixing typographical errors, and polishing the language for clarity and readability. 


\section{Related work}
\label{related_work}

\subsection{Model Quantization}

Quantization techniques aim to properly map the original continuous real values to a discrete low-bit format (e.g., INT8 or INT4), leading to significant memory saving and inference acceleration while maintaining the performance~\citep{zhou2016dorefa}. Quantization techniques can be generally divided into two categories: Post-training quantization (PTQ) and quantization-aware training (QAT). The QAT method generally yields better results due to better adaptation capability, but the high retraining cost (in both memory and computation) has discouraged many researchers. Therefore, most of the LLM quantization works focus on PTQ methods, which can be mainly divided into range-based PTQ~\citep{jacob2018quantization,nagel2019data,xiao2023smoothquant} and approximation-based PTQ methods~\citep{nagel2020up,li2021brecq, frantar2022gptq,shao2023omniquant}. The range-based PTQ typically relies on static analysis, where the range (e.g., minimum and maximum values) of weights or activations is collected to determine quantization parameters. The approximation-based PTQ methods, with more adaptation, explicitly frame quantization as an error minimization problem, optimizing quantized parameters to closely approximate the full-precision model outputs.

\subsection{Zeroth-order Optimization}

Zeroth-order optimization (ZO), which estimates gradients using only function evaluations, has emerged as an attractive alternative to classical first-order (FO) methods. Compared to FO approaches, ZO eliminates the need for backpropagation, thereby simplifying implementation and significantly reducing memory consumption. This makes it appealing in scenarios such as adversarial attack and defense~\citep{chen2017zoo, ye2018hessian, verma2023certified}, machine learning explainability~\citep{dhurandhar2018explanations, dhurandhar2019model}, reinforcement learning~\citep{vemula2019contrasting}, and on-chip training~\citep{gu2021efficient}. Despite these successes, ZO optimization has been primarily applied to relatively small-scale problems, since its convergence is generally slower and suffers from high variance due to random search. These challenges are exacerbated in large-scale settings such as LLM fine-tuning, where dimensionality and resource constraints amplify the difficulty. To access further acceleration and compression, there are some works that focus on combining ZO with quantization~\citep{zhou2025quzo, shang2025fine}, while our method is the first to overcome the accuracy degradation in both low-bit weight and activation quantization scenarios.

\section{Experimental Settings}
\label{settings}

\textbf{Quantization settings.}  To comprehensively evaluate our method, we consider both weight-only and weight-activation quantization, as they represent distinct deployment scenarios. For weight-activation quantization, we adopt per-channel weight quantization and per-token activation quantization, following prior work~\citep{dettmers2022gpt3, shao2023omniquant}. For weight-only quantization, we apply a group-wise strategy, where the weight matrix is partitioned into groups of a fixed size, and each group is assigned its own scale and zero point. Formally, for example, W2A16g128 refers to 2-bit weight-only quantization with 128 as the group size. When $g$ is omitted (e.g., W2A16), the default group size is set to the number of channels, corresponding to per-channel quantization.

\subsection{Hyperparameter Setting}
\label{hyper_setting}

\begin{table*}[htbp]
\small
\centering
\caption{The hyperparameter for experiments. For DiZO and DiZO LoRA, we only show the setting of extra hyperparameters, and have the same setting in other common hyperparameters with MeZO and MeZO LoRA respectively. 
}
\vspace{5pt}
\begin{tabular}{lcc}
\toprule
Experiment                              & Hyperparameters & Values         \\ \midrule
\multirow{7}{*}{Quantized Pre-training} & Batch size      & 4              \\
                                        & Iteration       & 10K            \\
                                        & Learning rate   & \{5e-7, 1e-8\} \\
                                        & Lr for smothing & 5e-6           \\
                                        & Lr for clipping & 1e-5           \\
                                        & Lr schedule     & Linear Decay   \\
                                        & $\epsilon$ in ZO   & \{1e-3, 5e-4\, 1e-4\} \\ \midrule
\multirow{5}{*}{Quantized Fine-tuning}  & Batch size      & \{32, 16\}     \\
                                        & Iteration       & 8K             \\
                                        & Learning rate   & \{1e-6, 5e-7\} \\
                                        & Lr schedule     & Constant       \\
                                        & $\epsilon$ in ZO   & 1e-3           \\ \bottomrule
\end{tabular}
\label{setting}
\end{table*}

We use the hyperparameters in Table~\ref{setting} for experiments on quantized pre-training and quantized fine-tuning. Specifically, pre-training prefers smaller learning rate and smaller perturbation for stable convergence, while for fine-tuning, we can use more aggressive optimization. Moreover, larger models prefers smaller learning rate and smaller perturbation, while smaller models tend to have the opposite.

\subsection{Settings of Quantized Pre-training}
\label{qpt_setting}

\textbf{Training and evaluation} Zeroth-order optimization has been shown to benefit from strong initialization~\citep{malladi2023fine}. To provide a stable starting point, we adopt a lightweight initialization strategy based on channel-wise scaling and shifting. Specifically, we pre-train quantized models with OmniQuant~\citep{shao2023omniquant} for a few epochs (2 epochs in the W4A4 setting and 4 epochs in the W2A16 setting), which corresponds to roughly 10\% of the full OmniQuant training cost. This initialization enables ZO to more effectively refine the quantization scales and shift factors. But for LLama-series weight-only quantization, we remove the smoothing scalar and only maintain weight clipping as smoothing only provides limited improvement. For quantized pre-training, we randomly select token segments with length 2048 and than calculate perplexity over WikiText2~\citep{merity2016pointer}, PTB~\citep{marcus1994penn}, and C4~\citep{raffel2020exploring}. To avoid overfitting on one specific dataset, half segments samples from WikiText2 and half from C4, while the total data size is keep same with previous work~\citep{shao2023omniquant,dettmers2022gpt3} and set as 128. We further assess zero-shot accuracy on a range of tasks including PIQA~\citep{bisk2020piqa}, ARC~\citep{clark2018think}, HellaSwag~\citep{zellers2019hellaswag}, Winogrande~\citep{sakaguchi2021winogrande}. We adhere to the GPTQ~\citep{frantar2022gptq} settings for language generation experiments, and leverage the lm-eval-harness~\citep{eval-harness} tool for the evaluation of all zero-shot tasks. 

\textbf{Baselines} We mainly compared with post-training quantization methods. For weight-only quantization, we compare with the vanilla round-to-nearest (RTN), GPTQ~\citep{frantar2022gptq}. For weight-activation quantization, we compare our method with SmoothQuant~\citep{xiao2023smoothquant}, RPTQ~\citep{yuan2023rptq}, OutlierSupression+ (OS+)~\citep{wei2022outlier}, OmniQuant~\citep{shao2023omniquant}, and one QAT method LLM-QAT~\citep{liu2023llm}. We keep the quantization setting of SmoothQuant and Outlier Suppression+ with per-channel weight quantization and per-token activation quantization for fair comparisons.

\subsection{Settings of Quantized fine-tuning}
\label{qft_setting}

\textbf{Training and evaluation.} Following existing works~\citep{chen2024efficientqat}, we fine-tune models on a small subset of Alpaca dataset~\citep{taori2023stanford}, and report the average accuracy on datasets including PIQA, ARC, HellaSwag and Winogrande. Moreover, we fine-tune and evaluate on two classification datasets, SST-2~\citep{socher2013recursive} and CB~\citep{de2019commitmentbank}, and two question answering datasets, SQuAD~\citep{rajpurkar2016squad} and DROP~\citep{dua2019drop}. For these tasks, we randomly sample 1,000 examples for training, 500 for validation, and 1,000 for testing, following the common few-shot fine-tuning protocol~\citep{malladi2023fine}. Performance is measured using accuracy for classification tasks and F1 scores for question answering tasks. The initialization of quantization parameters is identical to that used in quantized pre-training, and these parameters are frozen during fine-tuning. This design allows us to directly perform quantized fine-tuning without an additional quantized pre-training stage. For all fine-tuning experiments, we run our experiments three times with different seeds and report the averaged results.

\textbf{Baselines.} Beside the baseline methods used in quantized pre-training (in Section~\ref{Intrinsic}), we additionally compare our method with several leading QAT methods, including QLoRA~\citep{dettmers2023qlora}, QA-LoRA~\citep{xu2023qa}, PEQA~\citep{kim2023memory}, IR-QLoRA~\citep{qin2024accurate}, and EfficientQAT~\citep{chen2024efficientqat}.

\section{More ablation study on ZeroQAT}
\label{ablation}

In this section, we conduct comprehensive ablation study on ZeroQAT to illustrate the effectiveness of the components or strategies we used. Specifically, the results include: 

\begin{itemize}
    \item Effect of learnable outlier smoothing and weight clipping (Table~\ref{abl_leanable}).
    \item Effect of using fine-tuned checkpoint by first-order as PTQ's starting point (Table~\ref{PTQ_start}).
    \item Effect of using lightweight ZeroQAT for quantized pre-training (Table~\ref{quantized_ft}).
    \item Effect of the layer selection in lightweight ZeroQAT (Table~\ref{ablation_layer}).
    \item Effect of quantize parameter initialization and number of training samples (Table~\ref{abl_init} and Table~\ref{abl_samples}).
\end{itemize}

\subsection{Effect of learnable outlier smoothing and weight clipping}

In ZeroQAT, we introduce learnable smoothing scalar and weight clipping threshold to effectively relieve the outlier issue in low-bit quantization. We conduct experiments to ablate the effectiveness of these two learnable components. As shown in Table~\ref{abl_leanable}, both components positively influence performance, but learnable smoothing proves essential for weight activation quantization. Disabling it for W4A4 results in a marked increase in perplexity, mainly due to challenges with activation quantization outliers. For weight-only quantization, smoothing only offer slight improvement for less outlier occurs~\citep{shao2023omniquant}, therefore the smoothing is not used for weight-only quantization. 

\begin{table}[htbp]
\centering
\caption{Effect of each component. WikiText2 perplexity is reported in this table. W/O indicates removing the corresponding learnable components.}
\begin{tabular}{lcccc}
\toprule
\textbf{PPL~$\downarrow$}     & \multicolumn{2}{c}{\textbf{Llama-7B}} & \multicolumn{2}{c}{\textbf{Llama2-13B}} \\ 
\textbf{Leanable Components} & W4A4               & W2A16             & W4A4              & W2A16            \\ \midrule
Smoothing + Clipping           & 12.95     & 29.32     & 10.41    & 16.04   \\ \midrule
W/O Smoothing                & 1.4e3              & 29.61              & 5.2e3             & 15.97            \\
W/O Clipping                 &  16.64             & 9.4e3              & 18.7             & 2.8e3            \\
W/O Smoothing \& Clipping    & 2.1e3              & 1.2e4             & 1.7e4             & 4.6e3            \\ \bottomrule
\end{tabular}
\label{abl_leanable}
\end{table}

\subsection{Effect of using first-order fine-tuned model for PTQ}

When comparing our method with PTQ methods, the starting points is the full-precision fine-tuned model using ZO, therefore we investigate if the PTQ method can perform better when using fine-tuned model by first-order (FO) optimization. As shown in Table~\ref{PTQ_start}, when using first-order fine-tuned model as starting point, the memory cost of fine-tuning will dramatically increase to around 100 GB, while also not enhance the performance of PTQ, yielding much lower accuracy compared with FP ZO and ZeroQAT.

\begin{table*}[htbp]
\setlength{\tabcolsep}{2.6pt}
\centering
\caption{Compare with PTQ method with different fine-tuned model as starting points. Results of fine-tuning OPT-6.7B under W4A4 setting. ZO and FO indicates the starting fine-tuned checkpoint is from first-order and zeroth-order optimization respectively.}
\begin{tabular}{lcccccc}
\toprule
\textbf{Method} & \textbf{Fine-tuning Memory} & \textbf{PTQ memory} & \textbf{SST-2} & \textbf{CB} & \textbf{SQuAD} & \textbf{DROP} \\ \midrule
FP ZO           & 14.2 GB                     & -                   & 90.2           & 71.4        & 76.0           & 26.4          \\
OmniQuant (ZO)  & 14.2 GB                     & 4.4 GB              & 61.2           & 48.2        & 24.7           & 11.7          \\
OmniQuant (FO)  & 98.6 GB                      & 4.4 GB              & 58.7           & 55.3        & 31.8           & 13.5          \\
ZeroQAT         & 3.7 GB                      & -                   & 87.9           & 64.3        & 51.1           & 19.3          \\ \bottomrule
\end{tabular}
\label{PTQ_start}
\end{table*}

\subsection{Effect of using lightweight ZeroQAT for quantized pre-training}
\label{abl_lightweight}

In ZeroQAT fine-tuning, we devise a lightweight variant that keeps the query and value matrices in full precision while freezing and quantizing the remaining parameters. This design substantially reduces memory cost without sacrificing downstream task accuracy. However, when we apply the same strategy in quantized pre-training, we observe a clear performance drop, as shown in Table~\ref{quantized_ft}. For example, on WikiText2, lightweight ZeroQAT yields perplexity of 41.05 and 21.97 for LLama2-7B and Llama2-13B under W2A16, compared to 29.61 and 15.95 without lightweight strategy.

This degradation can be attributed to the different optimization dynamics in pre-training versus fine-tuning. Pre-training requires updating a much larger parameter space to capture broad linguistic patterns. Freezing most of the model limits the ability to adapt quantization parameters and compensate for quantization noise, leading to accumulated errors and higher perplexity. In contrast, fine-tuning operates on narrower task-specific distributions, where updating Q and V alone is sufficient to preserve performance. These results highlight that while selective fine-tuning is effective for downstream adaptation, full-parameter optimization remains crucial in the pre-training stage under quantization.

\begin{table*}[htbp]
\centering
\caption{Effect of using lightweight ZeroQAT in quantized pre-training. LW indicates lightweight. Perplexity on Wikitext2 is reported.}
\begin{tabular}{lcccc}
\toprule
\textbf{PPL~$\downarrow$}    & \multicolumn{2}{c}{\textbf{LLama2-7b}} & \multicolumn{2}{c}{\textbf{LLama2-13b}} \\
\textbf{Method} & W2A16              & W4A4              & W2A16               & W4A4              \\ \midrule
ZeroQAT (LW)         &     41.05                 &  19.34                 & 21.97                & 15.45  \\
ZeroQAT        &   29.61                 &  12.95                 & 15.95                & 10.41 \\ \bottomrule                 
\end{tabular}
\label{quantized_ft}
\end{table*}

\subsection{Effect of fine-tuning layer selection.} 
We propose a lightweight variant ZeroQAT that fine-tunes only the query (Q) and value (V) matrices in the attention layers, while freezing and quantizing the remaining parts of the model to reduce memory overhead. To evaluate the effectiveness of this strategy, we compare it with different layer selection strategy, and the results are reported in Table~\ref{ablation_layer}. The results show that this selective fine-tuning approach achieves a favorable trade-off between performance and memory efficiency: it maintains accuracy comparable to full-parameter fine-tuning, while reducing memory usage to 27\%-38\% of the full-parameter baseline, depending on the model size. This demonstrates that restricting updates to Q and V matrices provides substantial efficiency gains without significant loss of performance.

\begin{table*}[htbp]
    \centering
    \caption{Ablation study for selecting which layers to maintain full-precision and update in Quantized Fine-tuning. The highlighted line with a blue rectangle is the setting used in ZeroQAT. Attn\_Q: attention Query layer; Attn\_V: attention Value layer; Attn\_K: attention Key layer; Attn\_O: attention output projection; Dense: dense fully connected layer.}
    \vspace{5pt}
    \begin{tikzpicture}
        \node (table) [inner sep=0pt] { 
        \setlength{\tabcolsep}{6.0pt}
            \begin{tabular}{ccccccccc}
\toprule
\multicolumn{1}{l}{\multirow{2}{*}{\textbf{Attn\_Q}}} & \multicolumn{1}{l}{\multirow{2}{*}{\textbf{Attn\_V}}} & \multicolumn{1}{l}{\multirow{2}{*}{\textbf{Attn\_K}}} & \multicolumn{1}{l}{\multirow{2}{*}{\textbf{Attn\_O}}} & \multicolumn{1}{l}{\multirow{2}{*}{\textbf{Dense}}} & \multicolumn{2}{l}{\textbf{~~W2A16g128}} & \multicolumn{2}{c}{\textbf{W4A4}} \\
\multicolumn{1}{l}{}                                  & \multicolumn{1}{l}{}                                  & \multicolumn{1}{l}{}                                  & \multicolumn{1}{l}{}                                  & \multicolumn{1}{l}{}                                & Acc.    & \multicolumn{1}{l}{Memory}   & Acc. & \multicolumn{1}{l}{Memory} \\ \midrule
\cmark                                 & \cmark                                 & \cmark                                 & \cmark                                 & \multicolumn{1}{c|}{\cmark}          & 55.0    & 100\%                         & 56.8 & 91.7                       \\
\cmark                                 & \cmark                                 & \cmark                                 & \cmark                                 & \multicolumn{1}{c|}{\xmark}          & 54.1    & 42\%                         & 54.5 & 50\%                       \\
\cmark                                 & \cmark                                 & \cmark                                 & \xmark                                 & \multicolumn{1}{c|}{\xmark}          & 54.3    & 34\%                         & 55.4 & 44\%                       \\
\cmark                                 & \cmark                                 & \xmark                                 & \xmark                                 & \multicolumn{1}{c|}{\xmark}          & 54.5    & 27\%                         & 55.6 & 38\%                       \\
\cmark                                 & \xmark                                 & \xmark                                 & \xmark                                 & \multicolumn{1}{c|}{\xmark}          & 44.3    & 20\%                         & 46.9 & 32\%                       \\ \bottomrule
\end{tabular}
        };

\draw[blue!60!white, very thick] (-6.5, -1.05) rectangle (6.5,-0.67);
    \end{tikzpicture}
    
    \label{ablation_layer}
\end{table*}

\subsection{Effect of training sample size}

Conventional first-order QAT methods are generally data-inefficient, as they rely on large training datasets to provide stable and accurate gradients. To examine whether ZeroQAT exhibits similar behavior, we vary the number of training samples and report the results in Table~\ref{abl_samples}. Compared to the default setting of 128 samples, changing the sample size has only a minor effect on performance, with most perplexity variations remaining within 0.5. This indicates that, unlike conventional methods, ZeroQAT does not heavily rely on large-scale data for convergence. Instead, since its gradients are estimated through noisy zeroth-order approximations, ZeroQAT benefits more from additional optimization iterations rather than larger datasets.

\begin{table}[htbp]
\centering
\begin{minipage}[t]{\dimexpr0.61\textwidth\relax}
\centering
\small
\caption{Effect of the number of epochs to initialize the smoothing parameter using reconstruction loss. Perplexity on WikiText2 is reported. $^{*}$ indicates default setting.}
\begin{tabular}{c|ccc}
\toprule
\textbf{Epochs} & \textbf{LLama1-7B} & \textbf{LLama2-7B} & \textbf{OPT-6.7B} \\ \midrule
0~~ & 14.33 & 15.67 & 15.49 \\
1~~ & 11.68 & 13.87 & 12.53 \\
2$^{*}$ & 11.10 & 12.95 & 11.48 \\
10~~ & 10.86 & 12.38 & 11.12 \\
20~~ & 10.20 & 12.08 & 10.95 \\ 
\bottomrule
\end{tabular}
\label{abl_init}
\end{minipage}
\hfill
\begin{minipage}[t]{\dimexpr0.35\textwidth\relax}
\centering
\small
\caption{Effect of using different number of training samples (token segments) for training.}
\begin{tabular}{c|cc}
\toprule
\textbf{Samples} & \textbf{W2A16} & \textbf{W4A4} \\ \midrule
32~~ & 30.18 & 13.32 \\
64~~ & 29.87 & 13.05 \\
128$^{*}$ & 29.61 & 12.95 \\
256~~ & 29.65 & 12.81 \\
512~~ & 29.34 & 13.06 \\ 
\bottomrule
\end{tabular}
\label{abl_samples}
\end{minipage}
\end{table}

\section{More Quantized Pre-training Results}

To illustrate the generalizability of our method, we conduct quantized pre-training on OPT family models, and the results are shown in Table~\ref{tab:opt_quant_results}. For W6A6 quantization, similar to other baselines, ZeroQAT also achieves almost loss-less results on three datasets. For more challenge W4A4 setting, ZeroQAT consistently outperforms other baselines for better adaptation. 

We conduct experiment on LLama with 13B parameters, results on 5 zero-shot datasets is show in Table~\ref{tab:llama_quant_acc}.

\begin{table*}[th]
\centering
\small
\setlength{\tabcolsep}{3.5pt}
\renewcommand{\arraystretch}{1}
\caption{Weight-activation quantization results of OPT models on three datasets: WikiText2 (WIKI), Penn Treebank (PT), and C4. RPTQ* represents a variant that quantizes all activations except the softmax output.}
\begin{tabular}{clccccccccc}
\hline
\multicolumn{2}{l}{\textbf{OPT / PPL $\downarrow$}}        & \multicolumn{3}{c}{\textbf{OPT-2.7B}}                                                                                    & \multicolumn{3}{c}{\textbf{OPT-6.7B}}                                                                                    & \multicolumn{3}{c}{\textbf{OPT-13B}}                                                                                     \\
\multicolumn{2}{l}{Task}                                   & WIKI                                   & PT                                     & C4                                     & WIKI                                   & PT                                     & C4                                     & WIKI                                   & PT                                     & C4                                     \\ \hline
\multicolumn{1}{l}{FP16} & -                               & 12.47                                  & 15.13                                  & 13.16                                  & 10.86                                  & 13.09                                  & 11.74                                  & 10.13                                  & 12.34                                  & 11.20                                  \\ \hline
                         & SmoothQuant                     & 12.64                                  & 15.91                                  & 13.34                                  & 11.34                                  & 13.82                                  & 12.14                                  & 10.56                                  & 12.76                                  & 11.40                                  \\
                         & RPTQ                            & 13.19                                  & 16.37                                  & 14.04                                  & 11.19                                  & 13.98                                  & 12.08                                  & 11.19                                  & 13.98                                  & 12.08                                  \\
                         & RPTQ*                           & 12.71                                  & 15.53                                  & 13.33                                  & 10.96                                  & 13.24                                  & 11.86                                  & 10.96                                  & 13.24                                  & 11.86                                  \\
                         & OmniQuant                       & 12.62                                  & \textbf{15.32}                         & \textbf{13.29}                         & 10.96                                  & \textbf{13.20}                         & 11.81                                  & 10.21                                  & \textbf{12.47}                         & \textbf{11.17}                         \\
\multirow{-5}{*}{W6A6}   & \cellcolor[HTML]{EFEFEF}ZeroQAT & \cellcolor[HTML]{EFEFEF}\textbf{12.62} & \cellcolor[HTML]{EFEFEF}15.37          & \cellcolor[HTML]{EFEFEF}13.77          & \cellcolor[HTML]{EFEFEF}\textbf{10.14} & \cellcolor[HTML]{EFEFEF}13.41          & \cellcolor[HTML]{EFEFEF}\textbf{11.44} & \cellcolor[HTML]{EFEFEF}\textbf{9.60}  & \cellcolor[HTML]{EFEFEF}12.59          & \cellcolor[HTML]{EFEFEF}11.47          \\ \hline
                         & SmoothQuant                     & 131.47                                 & 107.10                                 & 120.57                                 & 1.8e4                                  & 1.4e4                                  & 1.5e4                                  & 7.4e3                                  & 6.5e3                                  & 5.6e3                                  \\
                         & RPTQ                            & 11.45                                  & 14.71                                  & 13.12                                  & 12.00                                  & 15.17                                  & 12.85                                  & 12.74                                  & 15.76                                  & 14.71                                  \\
                         & RPTQ*                           & 11.45                                  & 14.71                                  & 13.12                                  & 17.83                                  & 25.10                                  & 19.91                                  & 16.45                                  & 23.01                                  & 16.80                                  \\
                         & OmniQuant                       & 15.65                                  & 23.69                                  & 16.51                                  & 12.24                                  & 15.54                                  & 13.56                                  & 11.65                                  & 15.89                                  & 13.46                                  \\
\multirow{-5}{*}{W4A4}   & \cellcolor[HTML]{EFEFEF}ZeroQAT & \cellcolor[HTML]{EFEFEF}\textbf{14.42} & \cellcolor[HTML]{EFEFEF}\textbf{21.71} & \cellcolor[HTML]{EFEFEF}\textbf{15.14} & \cellcolor[HTML]{EFEFEF}\textbf{11.48} & \cellcolor[HTML]{EFEFEF}\textbf{14.84} & \cellcolor[HTML]{EFEFEF}\textbf{13.10} & \cellcolor[HTML]{EFEFEF}\textbf{10.65} & \cellcolor[HTML]{EFEFEF}\textbf{15.04} & \cellcolor[HTML]{EFEFEF}\textbf{12.62} \\ \hline
\end{tabular}
\label{tab:opt_quant_results}
\end{table*}

\begin{table*}[ht]
\centering
\small
\setlength{\tabcolsep}{2.6pt}
\renewcommand{\arraystretch}{1}
\caption{Weight-only and weight-activation quantization results of LLama models. This table reports the accuracy of 5 zero-shot tasks. }
\begin{tabular}{lllcccccc}
\hline
\textbf{LLama / Acc $\uparrow$} & \textbf{\#Bits} & \textbf{Method}                 & \textbf{PIQA}                 & \textbf{ARC-e}                & \textbf{ARC-c}                & \textbf{HellaSwag}            & \textbf{Winogrande}           & \textbf{Avg.}                          \\ \hline
                                & FP16            & -                               & 77.47                         & 72.38                         & 41.46                         & 73.00                         & 67.07                         & 65.26                                  \\
                                & W2A16           & RTN                             & 47.33                         & 28.17                         & 25.17                         & 25.10                         & 47.50                         & 34.67                                  \\
                                & W2A16           & GPTQ                            & 57.38                         & 36.62                         & 25.00                         & 42.50                         & 49.38                         & 40.35                                  \\
                                & W2A16           & EfficientQAT                    & 62.25                         & 48.12                         & 27.75                         & 47.50                         & 53.37                         & 47.65                                  \\
                                & W2A16           & \cellcolor[HTML]{EFEFEF}ZeroQAT & \cellcolor[HTML]{EFEFEF}68.25 & \cellcolor[HTML]{EFEFEF}53.87 & \cellcolor[HTML]{EFEFEF}27.62 & \cellcolor[HTML]{EFEFEF}51.62 & \cellcolor[HTML]{EFEFEF}57.38 & \cellcolor[HTML]{EFEFEF}\textbf{51.75} \\
                                & W4A4            & SmoothQuant                     & 49.80                         & 30.40                         & 25.80                         & 27.40                         & 48.00                         & 38.41                                  \\
                                & W4A4            & LLM-QAT                         & 51.50                         & 32.57                         & 28.63                         & 31.10                         & 51.90                         & 41.39                                  \\
                                & W4A4            & LLM-QAT+SQ                      & 55.93                         & 35.90                         & 30.60                         & 44.80                         & 50.60                         & 46.72                                  \\
                                & W4A4            & OS+                             & 62.70                         & 39.20                         & 32.64                         & 47.89                         & 52.96                         & 49.60                                  \\
                                & W4A4            & OmniQuant                       & 67.38                         & 53.87                         & 30.63                         & 53.12                         & 55.25                         & 52.15                                  \\
\multirow{-11}{*}{LLama-1-7B}   & W4A4            & \cellcolor[HTML]{EFEFEF}ZeroQAT & \cellcolor[HTML]{EFEFEF}66.98 & \cellcolor[HTML]{EFEFEF}54.12 & \cellcolor[HTML]{EFEFEF}32.19 & \cellcolor[HTML]{EFEFEF}57.85 & \cellcolor[HTML]{EFEFEF}54.37 & \cellcolor[HTML]{EFEFEF}\textbf{53.11} \\ \hline
                                & FP16            & -                               & 79.10                         & 74.83                         & 42.04                         & 75.62                         & 70.31                         & 66.33                                  \\
                                & W2A16           & RTN                             & 54.75                         & 26.25                         & 27.50                         & 29.75                         & 47.00                         & 37.05                                  \\
                                & W2A16           & GPTQ                            & 59.25                         & 33.00                         & 25.17                         & 44.25                         & 53.25                         & 42.98                                  \\
                                & W2A16           & EfficientQAT                    & 68.15                         & 53.08                         & 29.51                         & 49.26                         & 54.35                         & 50.87                                  \\
                                & W2A16           & \cellcolor[HTML]{EFEFEF}ZeroQAT & \cellcolor[HTML]{EFEFEF}72.41 & \cellcolor[HTML]{EFEFEF}57.24 & \cellcolor[HTML]{EFEFEF}32.12 & \cellcolor[HTML]{EFEFEF}53.70 & \cellcolor[HTML]{EFEFEF}57.54 & \cellcolor[HTML]{EFEFEF}\textbf{54.60} \\
                                & W4A4            & SmoothQuant                     & 61.04                         & 38.00                         & 26.27                         & 41.20                         & 50.64                         & 43.43                                  \\
                                & W4A4            & OS+                             & 66.73                         & 41.43                         & 29.33                         & 48.67                         & 52.80                         & 47.79                                  \\
                                & W4A4            & OmniQuant                       & 69.69                         & 56.22                         & 33.10                         & 58.96                         & 55.80                         & 54.75                                  \\
\multirow{-9}{*}{LLama-1-13B}   & W4A4            & \cellcolor[HTML]{EFEFEF}ZeroQAT & \cellcolor[HTML]{EFEFEF}71.86 & \cellcolor[HTML]{EFEFEF}58.27 & \cellcolor[HTML]{EFEFEF}32.68 & \cellcolor[HTML]{EFEFEF}57.16 & \cellcolor[HTML]{EFEFEF}56.35 & \cellcolor[HTML]{EFEFEF}\textbf{55.26} \\ \hline
\end{tabular}

\label{tab:llama_quant_acc}
\end{table*}

\section{Evaluation on MMLU}
\label{mmlu_eval}

To demonstrate the generalizability of ZeroQAT in more realistic and challenging scenarios, we evaluate our method on MMLU, fine-tuning on the Alpaca dataset~\citep{taori2023stanford} and then evaluate. We conduct experiments based on Llama1-7B, the results are shown in Table~\ref{mmlu_results}.

\begin{table}[]
\centering
\caption{Results of fine-tuning Llama1-7B on challenging MMLU benchmarks. 5-shot results are reported.}
\label{mmlu_results}
\begin{tabular}{lccccc}
\toprule
Llama-7B (FP: 38.41\%) & GPTQ    & EfficientQAT & SmoothQuant & OmniQuant & ZeroQAT \\ \midrule
W2A16                  & 23.71\% & 24.74\%      & -     & 25.65\%   & 26.57\% \\
W4A4                   & - & -      & 24.55\%     & 26.93\%   & 27.61\% \\ \bottomrule
\end{tabular}
\end{table}








\newpage

\section{Theoretical Analysis}
\label{theory}

\begin{proposition}[Unbiasedness and explicit second-moment bound for the two-point ZO estimator]
\label{prop:unbiased_variance}
Let $Q:\mathbb{R}^d\!\to\!\mathbb{Z}^d$ be the per-coordinate uniform quantizer of step size $\Delta>0$
(rounding with optional clipping/zero-point), and let $L(\cdot;B)$ be $G$-Lipschitz in its argument
with respect to $\ell_2$: $|L(z;B)-L(z';B)|\le G\|z-z'\|_2$ for all $z,z'$ and all mini-batches $B$.
For $\varepsilon>0$ define the Gaussian-smoothed (forward-only) objective
\[
f_\varepsilon(W)\;=\; \mathbb{E}_{u\sim\mathcal{N}(0,I_d)}\,\mathbb{E}_B\,
L\!\big(Q(W+\varepsilon u);B\big)\,,
\]
and the two-point ZO estimator with $q$ i.i.d.\ directions $u_i\sim\mathcal{N}(0,I_d)$:
\[
g_b(W;B)\;=\;\frac{1}{q}\sum_{i=1}^q
\frac{L\!\big(Q(W+\varepsilon u_i);B\big)-L\!\big(Q(W-\varepsilon u_i);B\big)}{2\varepsilon}\;u_i\,.
\]
Assume $\mathbb{E}_B\!\left[\,\left|L(Q(W+\varepsilon u);B)\right|\,\right]<\infty$ for all $W$ and $\varepsilon>0$.
Then:
\begin{enumerate}
\item[\textnormal{(i)}] \textbf{Unbiasedness.} The estimator targets the gradient of the smoothed objective:
\[
\mathbb{E}_{u,B}\big[g_b(W;B)\big]\;=\;\nabla f_\varepsilon(W)\,.
\]
\item[\textnormal{(ii)}] \textbf{Mean-squared error bound.} Writing the expectation over all randomness $(u,B)$,
\[
\mathbb{E}\,\big\|\,g_b(W;B)-\nabla f_\varepsilon(W)\,\big\|_2^2
\;\le\;\frac{1}{q}\!\left[
2G^2\,d(d{+}2)\;+\;\frac{G^2\,\Delta^2\,d^2}{2\,\varepsilon^2}
\right].
\]
In particular, ignoring the quantizer offset term (formally $\Delta{=}0$), the estimator’s MSE scales as
$O\!\big(G^2 d^2/q\big)$ under standard Gaussian directions.
\end{enumerate}
\end{proposition}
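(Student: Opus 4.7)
For part (i), my plan is to invoke the Gaussian smoothing identity, interpreting the discontinuous composition $F(z;B) := L(Q(z);B)$ via convolution with the Gaussian kernel. Writing
\[
f_\varepsilon(W) \;=\; \mathbb{E}_B \int_{\mathbb{R}^d} F(z;B)\,\phi_\varepsilon(z-W)\,dz,
\]
where $\phi_\varepsilon$ is the $\mathcal{N}(0,\varepsilon^2 I_d)$ density, I would justify differentiation under the integral by dominated convergence: since $\phi_\varepsilon \in C^\infty$ with rapidly decaying gradient $\nabla\phi_\varepsilon(y) = -(y/\varepsilon^2)\phi_\varepsilon(y)$, and $\mathbb{E}_B|F(W+\varepsilon u;B)|$ is assumed finite, the regularity of the Gaussian kernel compensates for the discontinuity of $L \circ Q$. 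This yields the one-point identity $\nabla_W f_\varepsilon(W) = \tfrac{1}{\varepsilon}\mathbb{E}_{u,B}[F(W+\varepsilon u;B)\,u]$; symmetrizing via the substitution $u\mapsto -u$ (which leaves the Gaussian measure invariant) gives $\mathbb{E}_{u,B}[g_b(W;B)] = \nabla f_\varepsilon(W)$ after Fubini.

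For part (ii), I would first reduce to the one-direction case using independence. Since each single-direction summand $h_i = \frac{L(Q(W+\varepsilon u_i);B)-L(Q(W-\varepsilon u_i);B)}{2\varepsilon}u_i$ is i.i.d.\ with mean $\nabla f_\varepsilon(W)$,
\[
\mathbb{E}\,\|g_b - \nabla f_\varepsilon\|_2^2 \;=\; \tfrac{1}{q}\,\mathbb{E}\,\|h_1 - \nabla f_\varepsilon\|_2^2 \;\le\; \tfrac{1}{q}\,\mathbb{E}\,\|h_1\|_2^2,
\]
so the task collapses to bounding $\mathbb{E}\|h_1\|_2^2$. The key quantitative input is the pointwise quantization bound $\|Q(z)-z\|_2 \le \tfrac{\sqrt d}{2}\Delta$ (since $\|Q(z)-z\|_\infty \le \Delta/2$), which combined with the triangle inequality gives $\|Q(W+\varepsilon u)-Q(W-\varepsilon u)\|_2 \le 2\varepsilon\|u\|_2 + \sqrt d\,\Delta$. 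Applying $G$-Lipschitzness of $L$ then squaring via $(a+b)^2 \le 2(a^2+b^2)$ yields $|L(Q(W+\varepsilon u);B)-L(Q(W-\varepsilon u);B)|^2 \le 2G^2(4\varepsilon^2\|u\|_2^2 + d\Delta^2)$.

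Substituting into $\|h_1\|_2^2$ and dividing by $(2\varepsilon)^2$ produces $\|h_1\|_2^2 \le 2G^2\|u\|_2^4 + \tfrac{G^2 d\,\Delta^2}{2\varepsilon^2}\|u\|_2^2$. Taking expectation and using the standard chi-square moments $\mathbb{E}\|u\|_2^2 = d$ and $\mathbb{E}\|u\|_2^4 = d(d+2)$ for $u\sim\mathcal{N}(0,I_d)$ delivers $\mathbb{E}\|h_1\|_2^2 \le 2G^2 d(d+2) + \tfrac{G^2 d^2 \Delta^2}{2\varepsilon^2}$; dividing by $q$ gives the advertised bound, and setting $\Delta=0$ recovers the classical $O(G^2 d^2/q)$ rate.

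The principal obstacle is the first step: differentiating $f_\varepsilon$ when $L \circ Q$ is neither continuous nor Lipschitz in $W$. Rather than transferring smoothness from $L$ (which fails under $Q$), I would push all regularity onto the Gaussian kernel via a change of variables $z = W + \varepsilon u$ followed by differentiation in $W$ of $\phi_\varepsilon(z-W)$, with the integrability hypothesis on $\mathbb{E}_B|L(Q(W+\varepsilon u);B)|$ providing the dominating envelope uniformly in a neighborhood of $W$. Once this is in place, the remainder is routine: the quantizer only contributes an additive $\sqrt d\,\Delta$ term whose square inflates the second moment by the $\Delta^2 d^2/\varepsilon^2$ summand, exactly as stated.
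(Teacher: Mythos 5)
Your proposal is correct and follows essentially the same route as the paper's own proof: the score-function (Gaussian-kernel convolution) identity plus antithetic symmetry for (i), and the i.i.d.\ variance reduction together with the quantizer geometry $\|Q(z)-z\|_2\le(\Delta/2)\sqrt d$, the triangle inequality, Lipschitzness, $(a+b)^2\le 2(a^2+b^2)$, and the chi-square moments $\mathbb{E}\|u\|_2^2=d$, $\mathbb{E}\|u\|_2^4=d(d+2)$ for (ii). The only cosmetic differences are that you make the dominated-convergence justification for differentiating under the integral more explicit, and you replace the paper's first $\le$ in the variance-reduction chain with an equality (which is tighter and correct, given unbiasedness from part (i)).
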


\begin{proof}
\emph{(i) Unbiasedness.}
Let $U\sim\mathcal{N}(0,I_d)$ and write $Z=W+\varepsilon U$. Then
$f_\varepsilon(W)=\mathbb{E}_{Z,B}\,L(Q(Z);B)$ with $Z\sim\mathcal{N}(W,\varepsilon^2 I_d)$.
Differentiating under the integral with respect to the mean of the Gaussian and using
$\nabla_W \log p_{W,\varepsilon}(Z)=(Z-W)/\varepsilon^2$,
\[
\nabla f_\varepsilon(W)\;=\;\mathbb{E}_{Z,B}\!\left[\frac{Z-W}{\varepsilon^2}\,L(Q(Z);B)\right]
\;=\;\frac{1}{\varepsilon}\,\mathbb{E}_{U,B}\!\left[U\,L\!\big(Q(W+\varepsilon U);B\big)\right].
\]
By antithetic symmetry of $U$,
\[
\mathbb{E}_{U,B}\!\left[\frac{L(Q(W+\varepsilon U);B)-L(Q(W-\varepsilon U);B)}{2\varepsilon}\,U\right]
\;=\;\frac{1}{\varepsilon}\,\mathbb{E}_{U,B}\!\left[U\,L\!\big(Q(W+\varepsilon U);B\big)\right],
\]
hence $\mathbb{E}_{u,B}[g_b(W;B)]=\nabla f_\varepsilon(W)$.

\smallskip
\emph{(ii) Second-moment/MSE bound.}
Let
\[
g(W;B,U)\;:=\;\frac{L(Q(W+\varepsilon U);B)-L(Q(W-\varepsilon U);B)}{2\varepsilon}\;U\,.
\]
Using independence of the $q$ i.i.d.\ samples,
\(
\mathbb{E}\,\|g_b-\nabla f_\varepsilon\|_2^2
\le \frac{1}{q}\,\mathbb{E}\,\|g-\mathbb{E}g\|_2^2
\le \frac{1}{q}\,\mathbb{E}\,\|g\|_2^2.
\)
By $G$-Lipschitzness of $L(\cdot;B)$ and triangle inequality for $Q$,
\[
\|g\|_2
\;\le\;\frac{G}{2\varepsilon}\,\big\|Q(W+\varepsilon U)-Q(W-\varepsilon U)\big\|_2\,\|U\|_2
\;\le\;G\|U\|_2^2+\frac{G\,\Delta\sqrt{d}}{2\varepsilon}\,\|U\|_2,
\]
where we used the standard quantization geometry
$\|Q(x)-Q(y)\|_2\le\|x-y\|_2+\|Q(x)-x\|_2+\|Q(y)-y\|_2
\le\|x-y\|_2+\Delta\sqrt{d}$ and $\|Q(z)-z\|_2\le(\Delta/2)\sqrt{d}$.
Applying $(a{+}b)^2\le 2a^2+2b^2$ and Gaussian moment identities
$\mathbb{E}\|U\|_2^2=d$, $\mathbb{E}\|U\|_2^4=d^2+2d$ yields
\[
\mathbb{E}\,\|g\|_2^2
\;\le\;2G^2\,\mathbb{E}\|U\|_2^4\;+\;\frac{G^2\,\Delta^2 d}{2\varepsilon^2}\,\mathbb{E}\|U\|_2^2
\;=\;2G^2\,\big(d^2+2d\big)\;+\;\frac{G^2\,\Delta^2\,d^2}{2\,\varepsilon^2}.
\]
Dividing by $q$ completes the proof.
\end{proof}



\paragraph{What STE assumes and why it is biased.}
The straight-through estimator (STE) replaces the ill-defined Jacobian $J_Q(W)$ of the
piecewise-constant quantizer $Q$ by a hand-crafted surrogate $S(W)$ (e.g., $S(W)=I$ or a clipped
indicator). The chain rule then yields the surrogate update
\[
g_{\mathrm{STE}}(W;B) \;=\; S(W)^\top \,\nabla_Q L\!\big(Q(W);B\big).
\]
Because $Q$ is flat almost everywhere, the true chain rule gives $J_Q(W)=0$ a.e., so asserting
$J_Q(W)\approx S(W)$ implicitly enforces \emph{gradient invariance to the discrete parameterization}:
$\nabla_W L(Q(W);B)\approx \nabla_Q L(Q(W);B)$ regardless of whether small perturbations of $W$
actually change $Q(W)$. This mismatch makes $g_{\mathrm{STE}}$ a biased estimator of any well-defined target (e.g., $\nabla f_\varepsilon(W)$ from Gaussian smoothing, or Clarke’s generalized gradient of
$f$), and the bias can remain large away from quantization thresholds where the true smoothed
gradient vanishes in magnitude. 

\begin{proposition}[Worst-case STE bias in expectation, 1-D]
\label{prop:ste-bias}
Assume $d=1$ and a uniform $b$-bit quantizer of step $\Delta>0$. Let $L(z;B)=G\,z$ be a
$G$-Lipschitz linear loss in its (quantized) argument. For $W\in\mathbb R$, let $r(W)$ be the distance
to the nearest quantization threshold and set $t := r(W)/\varepsilon$. Consider the common STE
choice $S(W)\equiv 1$. Then, for every $W$ and $\varepsilon>0$,
\[
\Big\|\,\mathbb E_B\big[g_{\mathrm{STE}}(W;B)\big] \;-\; \nabla f_\varepsilon(W)\,\Big\|
\;\;\ge\;\;
G \;-\; \frac{G}{\sqrt{2\pi}}\Big(\tfrac{\Delta}{\varepsilon}+2t+\tfrac{2}{t}\Big)\,e^{-t^2/2}.
\]
In particular, for any $\delta\in(0,1)$, if
\[
t \;\ge\;
\sqrt{\,2\log\!\left(\frac{1}{\sqrt{2\pi}\,\delta}\Big(\tfrac{\Delta}{\varepsilon}+2t+\tfrac{2}{t}\Big)\right)},
\]
then $\big\|\,\mathbb E_B[g_{\mathrm{STE}}]-\nabla f_\varepsilon(W)\,\big\|\ge (1-\delta)G$; i.e.,
the STE exhibits an $\Omega(G)$ bias in expectation away from thresholds. 
\end{proposition}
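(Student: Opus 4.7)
The strategy is to compute both quantities inside the bias norm in closed form and then control a Gaussian lattice sum with Mills' inequality. First I would handle the STE side: because $L(z;B)=Gz$ the inner gradient $\nabla_Q L(Q(W);B)=G$ is deterministic, and the prescription $S(W)\equiv 1$ gives $g_{\mathrm{STE}}(W;B)=G$ pointwise, so $\mathbb{E}_B[g_{\mathrm{STE}}(W;B)]=G$ for every $W$ and every $\varepsilon>0$.

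Second I would derive a clean formula for $\nabla f_\varepsilon(W)$. Writing the uniform quantizer as a superposition of unit steps, $Q(y)=c+\Delta\sum_k \mathbf{1}[y\ge \tau_k]$ with thresholds $\tau_k$, Fubini and the Gaussian CDF give $f_\varepsilon(W)=Gc+G\Delta\sum_k\bigl[1-\Phi((\tau_k-W)/\varepsilon)\bigr]$. Termwise differentiation (justified by dominated convergence, since $\phi$ decays super-polynomially) yields the key identity
\begin{equation*}
\nabla f_\varepsilon(W)\;=\;\frac{G\Delta}{\varepsilon}\sum_{k\in\mathbb{Z}}\phi\!\Bigl(\frac{W-\tau_k}{\varepsilon}\Bigr).
\end{equation*}
In particular, the smoothed gradient is a sum of Gaussian spikes centered at the thresholds, each weighted by $G\Delta/\varepsilon=Gs$.

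Third I would bound this series in terms of $t=r(W)/\varepsilon$ and $s=\Delta/\varepsilon$. Parameterizing by the nearest threshold gives $\nabla f_\varepsilon(W)/G=s\sum_{k\in\mathbb{Z}}\phi(t-ks)$, and because $r(W)\le\Delta/2$ one has $t\le s/2$. Isolating the $k=0$ term produces the leading contribution $Gs\phi(t)$, which accounts for the $(s/\sqrt{2\pi})e^{-t^2/2}$ piece of the bound. The remaining terms split into the two monotone tails $\sum_{j\ge 1}\phi(js-t)$ and $\sum_{j\ge 1}\phi(js+t)$; on each, monotonicity of $\phi$ on the positive ray (valid because $js\pm t>0$ for $j\ge 1$ under $t\le s/2$) permits a sum-to-integral comparison, and Mills' inequality $\int_a^\infty\phi(y)\,dy\le\phi(a)/a$ (for $a>0$) converts the integrals into $\phi(s\pm t)/(s\pm t)$ terms. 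Using $s-t\ge t$, which gives $\phi(s-t)\le\phi(t)$ and $1/(s-t)\le 1/t$, and collecting the endpoint-of-sum contributions, I would obtain
\begin{equation*}
\nabla f_\varepsilon(W)\;\le\;\frac{G}{\sqrt{2\pi}}\Bigl(s+2t+\tfrac{2}{t}\Bigr)e^{-t^2/2}.
\end{equation*}
The first claim then follows from the triangle inequality $|G-\nabla f_\varepsilon(W)|\ge G-\nabla f_\varepsilon(W)$, which is either the stated inequality (when the upper bound is at most $G$) or vacuous otherwise. For the quantitative corollary, demanding $(s+2t+2/t)/\sqrt{2\pi}\le\delta\,e^{t^2/2}$ and taking logarithms produces exactly the displayed threshold on $t$, and the bias then exceeds $(1-\delta)G$.

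\textbf{Main obstacle.} The technical heart is matching the specific prefactor $(2t+2/t)\phi(t)$ on the tail sum. The $(2/t)\phi(t)$ piece is a clean output of Mills' ratio applied to the integral tails, but the $2t\phi(t)$ piece must absorb the ``second-nearest threshold'' contribution $s\phi(s-t)$, which is most delicate in the borderline regime $t\approx s/2$ where $s-t\approx t$ and the exponential decay of $\phi(s-t)$ relative to $\phi(t)$ disappears; this is exactly where the constant $2$ (rather than a larger universal constant) is pinned down. One must also verify that the sum-to-integral comparison on the $\phi(js-t)$ arm is valid despite the function being non-monotone across $js-t=0$, which is handled by the $t\le s/2$ restriction forcing $js-t\ge s-t\ge t>0$ for all $j\ge 1$. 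Every other ingredient---the evaluation of the STE side, the Gaussian-CDF representation of $f_\varepsilon$, and the log-inversion in the $\delta$ corollary---reduces to routine manipulation.
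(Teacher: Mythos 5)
Your treatment of the STE side ($\mathbb E_B[g_{\mathrm{STE}}]=G$ for the linear loss with $S\equiv 1$), the triangle inequality step, and the log-inversion for the $\delta$ corollary all match the paper's argument. Where you genuinely diverge is in the key lemma bounding $\|\nabla f_\varepsilon(W)\|$: the paper works with the two-point Gaussian representation
$\nabla f_\varepsilon(W)=\mathbb E_{u}\!\left[\tfrac{L(Q(W+\varepsilon u))-L(Q(W-\varepsilon u))}{2\varepsilon}\,u\right]$, observes that the integrand vanishes for $|u|<t$ and obeys $|Q(W+\varepsilon u)-Q(W-\varepsilon u)|\le 2\varepsilon|u|+\Delta$ otherwise, and then closes with the Gaussian tail identities $\mathbb E[|U|\mathbf 1\{|U|\ge t\}]=2\phi(t)$, $\mathbb E[U^2\mathbf 1\{|U|\ge t\}]=2(t\phi(t)+1-\Phi(t))$ and Mills' bound. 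You instead evaluate $\nabla f_\varepsilon(W)$ \emph{exactly} as a lattice sum of Gaussian spikes, $\nabla f_\varepsilon(W)=\tfrac{G\Delta}{\varepsilon}\sum_k\phi\!\big((W-\tau_k)/\varepsilon\big)=Gs\sum_k\phi(t-ks)$ with $s=\Delta/\varepsilon$, which is a clean and correct identity (it follows from $f_\varepsilon(W)=G[c+\Delta\sum_k\Phi((W-\tau_k)/\varepsilon)]$ and termwise differentiation).

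The gap is in how you then bound the lattice sum. You propose sum-to-integral plus Mills' on each tail, which (for the $j\ge 1$ arm with argument $js-t$) gives the upper estimate $s\phi(s-t)+\tfrac{\phi(s-t)}{s-t}$, and then you invoke $s-t\ge t\Rightarrow \phi(s-t)\le\phi(t)$, $1/(s-t)\le 1/t$ to land on $2t\phi(t)+\tfrac{2}{t}\phi(t)$. That last step does not go through: you need $s\phi(s-t)\le 2t\phi(t)$, and since $t\le s/2$ means $s\ge 2t$ (not $s\le 2t$), the two replacements $s\to 2t$ and $\phi(s-t)\to\phi(t)$ both \emph{increase} the left side. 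Concretely, with $t=0.5$, $s=1.01$ one has $s\phi(s-t)\approx 0.3538>0.3521\approx 2t\phi(t)$, so the $j=1$ term alone already exceeds its allotted budget; you cannot absorb it termwise and must borrow slack from the $2\phi(t)/t$ piece and the $+t$ arm. You flag this as ``the technical heart'' and ``exactly where the constant $2$ is pinned down,'' but you do not actually close it, and the sketched ingredients (monotonicity, Mills', $s-t\ge t$) are insufficient by themselves. Numerically the composite inequality $s\sum_{j\ge 1}[\phi(js-t)+\phi(js+t)]\le 2(t+\tfrac1t)\phi(t)$ does appear to hold, but a proof would require a more careful global estimate. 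The cleanest repair is to push the lattice sum back to the two-point integral via the threshold-counting identity $\sum_k 2\phi(|W-\tau_k|/\varepsilon)=\mathbb E_u\big[|u|\cdot N(W-\varepsilon|u|,\,W+\varepsilon|u|)\big]$ and bound $N(\cdot)\le 2\varepsilon|u|/\Delta+1$ on $\{|u|\ge t\}$---which is precisely the paper's argument in disguise. As written, your lemma has a missing step and the constant is not established.
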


\begin{proof}[Proof via two lemmas]
We first record two standard ingredients.

\begin{lemma}[1-D Gaussian tail identities]
\label{lem:gaussian-tails}
If $U\sim\mathcal N(0,1)$ and $t\ge 0$, then


    \begin{align}
    \mathbb E\!\big[|U|\mathbf 1\{|U|\ge t\}\big] &= 2\phi(t), \\
    \mathbb E\!\big[U^2\mathbf 1\{|U|\ge t\}\big] &= 2\big(t\phi(t)+1-\Phi(t)\big), \\
    \mathbb P(|U|\ge t) &= 2(1-\Phi(t)).
\end{align}

and Mills’ bound $1-\Phi(t)\le \phi(t)/t$ holds for $t>0$, where
$\phi(t)=\frac{1}{\sqrt{2\pi}}e^{-t^2/2}$ and $\Phi$ is the standard normal cdf.  \qedhere
\end{lemma}

\begin{lemma}
\label{lem:grad-decay}
Let $d=1$ and $t=r(W)/\varepsilon$. If $L(\cdot;B)$ is $G$-Lipschitz in its argument, then
\[
\big\|\nabla f_\varepsilon(W)\big\|
\;\le\;
\frac{G}{\sqrt{2\pi}}
\Big(\tfrac{\Delta}{\varepsilon}+2t+\tfrac{2}{t}\Big)\,e^{-t^2/2}.
\]
\emph{Proof.}
From the Gaussian-smoothing representation (two-point form),
\[
\nabla f_\varepsilon(W)
=\mathbb E_{u,B}\!\left[\frac{L(Q(W+\varepsilon u);B)-L(Q(W-\varepsilon u);B)}{2\varepsilon}\,u\right].
\]
By $G$-Lipschitzness and symmetry,
\[
\|\nabla f_\varepsilon(W)\|
\le \frac{G}{2\varepsilon}\,\mathbb E_u\!\left[\,|u|\,\big|Q(W+\varepsilon u)-Q(W-\varepsilon u)\big|\,\right].
\]
If $|u|<t$, both perturbations stay in the same quantization cell and the difference vanishes; otherwise,
the quantization geometry yields $\big|Q(W+\varepsilon u)-Q(W-\varepsilon u)\big|
\le (2\varepsilon|u|+\Delta)\mathbf 1\{|u|\ge t\}$. Hence
\[
\|\nabla f_\varepsilon(W)\|
\le \frac{G}{2\varepsilon}\,\mathbb E\!\left[(2\varepsilon|u|+\Delta)|u|\mathbf 1\{|u|\ge t\}\right].
\]
Expanding and applying Lemma~\ref{lem:gaussian-tails} (with Mills’ bound) gives the claim. \qed
\end{lemma}

We now prove the proposition. For the stated STE with $S(W)\equiv 1$ and the linear loss
$L(z;B)=Gz$, one has $\nabla_Q L(Q(W);B)\equiv G$, hence
\[
\mathbb E_B\!\big[g_{\mathrm{STE}}(W;B)\big] \;=\; G.
\]
Therefore,
\[
\Big\|\,\mathbb E_B[g_{\mathrm{STE}}]-\nabla f_\varepsilon(W)\,\Big\|
\;\ge\;
G-\big\|\nabla f_\varepsilon(W)\big\|
\;\stackrel{\text{(Lemma \ref{lem:grad-decay})}}{\ge}\;
G-\frac{G}{\sqrt{2\pi}}\Big(\tfrac{\Delta}{\varepsilon}+2t+\tfrac{2}{t}\Big)\,e^{-t^2/2}.
\]
Rearranging yields the thresholded $(1-\delta)G$ lower bound.  \qedhere
\end{proof}

\begin{remark}
This formalizes the violation of gradient invariance
and explains the $\Omega(G)$ expected bias away from thresholds. Multidimensional extensions follow
by coordinate-wise threshold distances and union/tail bounds. 
\end{remark}

\end{document}